%% file: main.tex
\documentclass[a4paper,fleqn]{cas-dc}

\usepackage[numbers,sort&compress]{natbib}
\usepackage{color,array,amsthm}
\usepackage{amsmath}
\usepackage{amssymb}
\usepackage{amsfonts}
\usepackage{bm}
\usepackage{mathtools}
\usepackage{booktabs}
\usepackage{siunitx}
\usepackage{algorithm}
\usepackage{algpseudocode}
\usepackage{graphicx}
\usepackage{multirow}
\usepackage{makecell}
\usepackage{url}

\hypersetup{hidelinks,hypertexnames=false}

\input{glyphtounicode}
\newtheorem{theorem}{Theorem}
\newtheorem{assumption}{Assumption}
\newtheorem{lemma}{Lemma}

\newtheorem{remark}{Remark}
\newtheorem{proposition}{Proposition}

\begin{document}

\shorttitle{Disturbance separation and MPC for airborne gimbals}
\shortauthors{Y. Lu et al.}

\title [mode = title]{Physics-Informed and Data-Driven Composite Disturbance Separation and Predictive Control for Airborne Electro-Optical Gimbals with Multiple Constraints}
\tnotemark[1]
\tnotetext[1]{This work was supported by the National Natural Science Foundation of China under Grants U24B20158, 62388101, 62227813, 62595803, T2525027, 62573034, and 62503029; the China Postdoctoral Science Foundation under Grant 2024M764090; the Fundamental Research Funds for the Central Universities under Grant 2242025F20002; and the Fundamental and Interdisciplinary Disciplines Breakthrough Plan of the Ministry of Education of China under Grant JYB2025XDXM206.}

\author[a]{Yaokun Lu}
\ead{yaokunlu@buaa.edu.cn}
\author[a]{Yifan Li}
\ead{yifanLee@buaa.edu.cn}
\author[a]{Zhixuan Zhang}
\ead{by2403080@buaa.edu.cn}
\author[b]{Enmei Wang}
\cormark[1]
\ead{enmeiwang@126.com}
\author[c]{Hao Teng}
\cormark[1]
\ead{hteng@buaa.edu.cn}
\author[a]{Jianzhong Qiao}
\ead{jzqiaobuaa@126.com}

\cortext[1]{Corresponding authors.}

\affiliation[a]{organization={School of Automation Science and Electrical Engineering, Beihang University},
            city={Beijing},
            postcode={100191},
            country={China}}
\affiliation[b]{organization={Hangzhou Innovation Institute of Beihang University},
            city={Hangzhou},
            postcode={311115},
            country={China}}
\affiliation[c]{organization={School of Instrumentation Science and Optoelectronic Engineering, Beihang University},
            city={Beijing},
            postcode={100191},
            country={China}}

\begin{abstract}
This paper studies the constrained pointing control problem of airborne electro-optical gimbals subjected to composite periodic and aperiodic disturbances. Under carrier vibration, input delay, and actuator constraints, achieving line-of-sight stabilization and tracking is challenging. A disturbance-separated model predictive control (MPC) scheme is proposed to compensate these heterogeneous disturbances according to their physical characteristics. An adaptive periodic observer predicts multi-harmonic vibrations over the MPC horizon using linear-regression frequency estimation. A linear observer estimates the aperiodic disturbance, and a bounded long short-term memory (LSTM) network predicts its change over the input delay. Embedding periodic predictions into the MPC model while applying aperiodic compensation outside the optimizer limits amplification of aperiodic estimation errors and improves pointing accuracy under frequency changes and parameter mismatch. On the stated headroom envelope, input-set tightening provides feasible commands within the prescribed limits, and a regional practical tracking bound is established under explicit input-sequence and predicted-state margin conditions. A UAV-mounted two-axis gimbal platform is assembled for comparative indoor hovering experiments, complemented by simulations under frequency changes and parameter mismatch. In the reported trials, the proposed method achieves yaw and pitch relative-angle root-mean-square errors (RMSEs) of $0.770^\circ$ and $0.350^\circ$, reducing tracking errors by 25.1\% and 45.9\% compared with linear active disturbance rejection control.
\end{abstract}

\begin{keywords}
Disturbance separation \sep airborne electro-optical gimbals \sep long short-term memory \sep model predictive control \sep periodic disturbance prediction \sep pointing control
\end{keywords}

\maketitle

\section{Introduction}\label{sec:introduction}

Airborne electro-optical gimbals support emergency response, maritime search and rescue, and bridge crack inspection by directing imaging sensors toward targets of interest. These missions require accurate two-axis line-of-sight (LOS) control, where the gimbal operates as an inertially stabilized platform (ISP) with azimuth and elevation degrees of freedom~\cite{rueStabilizationPrecisionElectrooptical1969, hilkertInertiallyStabilizedPlatform2008, Masten2008, ekstrandEquationsMotionTwoaxes2001}.

Achieving high-accuracy pointing control for such systems is challenged by composite disturbances with distinct physical mechanisms and by multiple constraints. In typical low-to-medium altitude tracking scenarios, atmospheric turbulence and propeller vibration produce high-frequency periodic jitter~\cite{renAngularVibrationSurvey2003, maoModelingRobustContinuous2020, yunInvestigationTwoStageVibration2020}, while rapid slewing excites state-dependent aperiodic torques including gravity, gyroscopic effects~\cite{hilkertInertiallyStabilizedPlatform2008, dongDynamicModelControl2017}, and nonlinear friction~\cite{congIsolationControlInertially2016}. Since these disturbances span distinct spectral bands, a separation strategy is physically motivated. Furthermore, the ISP operates within actuator saturation and field-of-view (FOV) constraints~\cite{dongDynamicModelControl2017}. For aerial vehicles, MPC combined with disturbance observers has been applied to trajectory tracking~\cite{tianAdaptiveModelPredictive2026, nigusieExtendedStateObserver2026}.

Robust and disturbance-observer-based control (DOBC) schemes, including Extended State Observer (ESO) and Active Disturbance Rejection Control, have been extensively studied for ISP pointing~\cite{guoDisturbanceAttenuationRejection2005, chenDisturbanceobserverbasedControlRelated2015, maoAdaptiveNonsingularFast2022, safaRobustOutputFeedback2019, golestaniDisturbanceObserverConstrained2023}. For ISPs under composite disturbances, Lei \textit{et al.}~\cite{leiCompositeControlMethod2015a} combined an adaptive radial basis function neural network with an ESO, and Chang \textit{et al.}~\cite{changCompensationControlStrategy2024} compared adaptive and sliding mode observers for friction and moving-base compensation. Recent work~\cite{zhuangMechatronicDisturbanceSuppression2026, miaoRobustObserverBasedVisual2027} applied ESO-based feedforward to electro-optical platforms and UAV-mounted cameras. However, most DOBC schemes adopt a single total-disturbance estimate, limiting accuracy when periodic and aperiodic components coexist~\cite{guoCompositeDisturbanceFiltering2023}, and require precise disturbance models.

Learning-based techniques augment estimation without explicit models. Sun \textit{et al.}~\cite{sunNeuralNetworkBasedDOBC2017} established a neural DOBC framework, and Shen and Xu~\cite{shenAdaptiveNeuralNetworkBased2021} used adaptive radial basis functions for unmanned-helicopter disturbance rejection. For ISPs, physics-informed neural networks~\cite{leblebiciogluPhysicsInformedDisturbanceEstimation2022} and online learning frameworks~\cite{jiaEVOLVEROnlineLearning2024} improved adaptability, while Huang \textit{et al.}~\cite{huangLSTMMPCDeepLearning2023} combined LSTM with MPC. These methods still use aggregate estimates without structural separation and struggle with high-frequency dynamics due to spectral bias~\cite{rahamanSpectralBiasNeural2019}.

Refined disturbance separation has been proposed to decouple composite disturbances by their influence topology~\cite{guoAntidisturbanceControlTheory2014, zhuDisturbanceSeparationBasedEnhanced2024, yangFixedTimeRefinedDisturbance2024, baoDeepCoupledSlidingMode2025, tengRefinedDisturbanceUtilization2025}, with related studies on antidisturbance controllability and fixed-time predictive control~\cite{yuAntidisturbanceControllabilityAnalysis2021, liuFixedTimeDOBPredictive2024}. Nevertheless, aperiodic estimation in these methods still requires precise physical models.

When disturbance estimates are integrated into MPC, they are commonly held constant over the prediction horizon~\cite{Yang2012CCDC,Maeder2009Automatica}. For periodic disturbances, horizon prediction using a nominal or known period~\cite{Li2014JPC,Xie2026Automatica} or learned profiles~\cite{Ahmed2025LCSS,Zhou2026TIE} has been studied. In airborne ISPs, however, vibration frequency varies with flight state, and propagating an aperiodic observer estimate through the prediction model feeds estimation error into the MPC optimization, which under unmodeled lag can drive actuator saturation (Section~\ref{sec:res_main}). Existing gradient-type frequency adaptation laws~\cite{yangAdaptivePeriodicdisturbanceObserver2023} are nonlinear in the unknown frequency and converge slowly after frequency switches.
Motivated by the above analysis, this paper proposes a disturbance-separated MPC (MPC-DS) framework that combines physics-informed periodic prediction with data-driven delay-residual learning for airborne two-axis ISPs. The proposed disturbance separator decouples the total disturbance into periodic and aperiodic components. Periodic vibration is predicted over the MPC horizon by an APDO with a projected linear-regression frequency estimator, while the aperiodic disturbance is estimated by a biomimetic LSTM-based disturbance observer (LSTM-DO). Periodic predictions enter the MPC prediction model, whereas the aperiodic estimate and its lead correction are compensated outside the optimizer. The MPC formulation incorporates tightened input bounds together with mechanical-range and FOV constraints~\cite{dongDynamicModelControl2017, moratoDataScienceModel2024, liSurveyControlLyapunov2023a, fuAntiWindupMethodBased2024, tengAdaptiveFixedTimeDisturbance2023, kimQuadrotorConstraints2024}.
The principal contributions of this work are summarized as follows:
\begin{list}{\arabic{enumi})}{%
	\usecounter{enumi}%
	\setlength{\leftmargin}{1.6em}%
	\setlength{\labelwidth}{1.1em}%
	\setlength{\labelsep}{0.5em}%
	\setlength{\topsep}{0pt}%
	\setlength{\partopsep}{0pt}%
	\setlength{\itemsep}{0pt}%
	\setlength{\parsep}{0pt}%
}
	\item \emph{Coupled gimbal--disturbance modeling:} A coupled Newton--Euler dynamic model characterizes how carrier motion, two-axis kinematics, and multi-source disturbances---gyroscopic, gravitational, cross-axis coupling, and LuGre friction torques---enter the azimuth and elevation channels, motivating decomposition into a retained multi-harmonic component and a complementary residual component for targeted observer design.
	\item \emph{Heterogeneous disturbance separation:} A biomimetic separator integrates an APDO, whose linear-regression frequency law with projection keeps the estimate in the admissible band and follows frequency switches, and a data-driven LSTM-DO for the aperiodic torque, whose projected outputs bound the network effect on the estimation error. The LSTM-DO employs a prediction-correction architecture: the linear observer corrects the current value, and the network predicts the delay-induced change that linear lead correction misses.
	\item \emph{Constraint-aware disturbance-separated MPC:} A composite predictive control architecture extrapolates the periodic estimate over the prediction horizon using the online frequency estimate while compensating for the aperiodic estimate outside the optimizer with delay-lead correction. The input-set construction gives feasible bounded commands on the stated headroom envelope. A regional input-to-state practical tracking bound is established under explicit full-sequence input margins, predicted-state margins and bounded implementation residuals.
\end{list}

The remainder of this paper is structured as follows. Section~\ref{sec:modeling} details the dynamic modeling of the dual-axis ISP. Section~\ref{sec:control_design} elaborates on the proposed composite anti-disturbance control architecture, including stability analysis. Section~\ref{sec:results} presents simulation and experimental results to evaluate the approach, and Section~\ref{sec:conclusion} concludes the paper.

\section{Deep-Coupled Dynamic Modeling}\label{sec:modeling}

This section derives the coupled dynamic model of the two-axis ISP using the Newton--Euler formulation, proceeding from rigid-body kinematics to the state-space equations. It then characterizes the periodic and aperiodic disturbance components---including gravitational, gyroscopic, airframe-vibration, friction, and coupling torques---and formulates the physical and mission constraints.

\subsection{Kinematics and Dynamics Modeling}\label{sec:kinematics}

The ISP is equipped with a gyroscope, motor encoders, and an electro-optical camera. The mechanism consists of an outer azimuth gimbal (body A) rotating about the $z$-axis by $\theta_a$ and an inner elevation gimbal (body E) rotating about the $y$-axis by $\theta_e$, with the camera LOS aligned with the elevation platform. By the Newton--Euler formulation and projection onto the joint axes, the manipulator-form dynamics are (detailed derivation in Supplement~\ref{S-S01}):
\begin{equation}
	\label{eq:manipulator_form}
	\begin{bmatrix}
		M_{11} & M_{12} \\
		M_{21} & M_{22}
	\end{bmatrix}
	\begin{bmatrix}
		\ddot{\theta}_a \\
		\ddot{\theta}_e
	\end{bmatrix}
	+
	\begin{bmatrix}
		C_{11} & C_{12} \\
		C_{21} & C_{22}
	\end{bmatrix}
	\begin{bmatrix}
		\dot{\theta}_a \\
		\dot{\theta}_e
	\end{bmatrix}
	=
	\begin{bmatrix}
		\tau_a \\
		\tau_e
	\end{bmatrix}
	+
	\begin{bmatrix}
		d_a \\
		d_e
	\end{bmatrix},
\end{equation}
where $\ddot{\theta}_a$ and $\ddot{\theta}_e$ denote the joint angular accelerations, and $[\tau_a,\tau_e]^\mathrm{T}$ and $[d_a,d_e]^\mathrm{T}$ denote the control torque vector and disturbance torque vector, respectively. The matrices $\bm M$ and $\bm C$ collect the configuration-dependent inertia and velocity-coupling terms, with $M_{12}=M_{21}=C_{22}=0$ for this model. Their element expressions are given in Supplement~\ref{S-S01}.
Defining the state vector $\bm{x} = [\theta_a, \dot{\theta}_a, \theta_e, \dot{\theta}_e]^\mathrm{T}$, the explicit state-space dynamics are expressed as:
\begin{equation}\label{eq:state_space_physical}
	\left\{\begin{aligned}
		\dot{x}_1&=x_2,\\
		\dot{x}_2&=J_a^{-1}\left(\tau_a +H_a + d_a\right),\\
		\dot{x}_3&=x_4,\\
		\dot{x}_4&=J_e^{-1}\left(\tau_e +H_e + d_e\right),
	\end{aligned}\right.
\end{equation}
where $J_a = M_{11}$, $J_e = M_{22}$, $H_a = -C_{11}x_2 - C_{12}x_4$, and $H_e = -C_{21}x_2$. Here, $x_1 = \theta_a$, $x_2 = \dot{\theta}_a$, $x_3 = \theta_e$, and $x_4 = \dot{\theta}_e$ represent the azimuth angle, azimuth angular velocity, elevation angle, and elevation angular velocity, respectively. For the speed-command implementation, Section~\ref{sec:sim_setup} uses an identified input-equivalent model, with the input and disturbance expressed in command levels.

\subsection{Disturbance Modeling and Analysis}\label{sec:disturbance_modeling}

Based on the kinematic relations and Newton--Euler dynamics, this subsection characterizes the axis disturbances $d_a$ and $d_e$. With the subscript $i\in\{a,e\}$ denoting the azimuth or elevation axis, the total disturbance $d_i$ comprises gravitational, gyroscopic, airframe-vibration, and friction torques, together with unmodeled dynamics:
\begin{equation}\label{eq:total_disturbance_def}
	d_i = -f_{g,i} - f_{gyro,i} - f_{v,i} - f_{d,i} + \delta_i.
\end{equation}
Each component is analyzed below; detailed derivations are in Supplement~\ref{S-S02}.

Gravitational and gyroscopic torques depend on mass imbalance and carrier attitude and angular rates. Because carrier measurements arrive with delay and a lower update rate than the gimbal loop, these contributions are treated as unknown~\cite{MIL-STD-1553B,Masten2008}.

\textit{Vibration Torque $f_{v,i}$}: A multi-harmonic model is adopted to describe the aircraft vibration disturbances~\cite{wangDisturbanceObservationSuppression2024}:
\begin{equation}\label{eq:harmonic_disturbance}
	\left\{
	\begin{aligned}
		f_{v,a} &= \gamma_{a,0} + \sum_{j=1}^{N_a} \gamma_{a,j} \sin(\Omega_{a,j} t + \varphi_{a,j}), \\
		f_{v,e} &= \gamma_{e,0} + \sum_{j=1}^{N_e} \gamma_{e,j} \sin(\Omega_{e,j} t + \varphi_{e,j}),
	\end{aligned}
	\right.
\end{equation}
where $\gamma_{i,0}$ denotes the constant vibration bias, $N_i$ denotes the number of dominant modes, $\gamma_{i,j}$ is the vibration amplitude, $\Omega_{i,j}$ is the harmonic frequency, and $\varphi_{i,j}$ is the phase offset. These modal parameters depend on the aircraft flight state. The APDO below retains the modes described by one common fundamental and its integer harmonics. Constant bias and modes outside this retained family are assigned to the residual component.

The LuGre friction component~\cite{canudasdewitNewModelControl1995a} exhibits velocity-dependent transitions; its uncertain parameters are treated as unknown in the controller. The residual $\delta_i$ includes cable restraint, higher-order coupling and sensor bias.

Based on their physical origins, the disturbance components fall into two categories. The vibration-induced torque $f_{v,i}$, which originates from external airframe excitations (propellers and atmospheric turbulence), is independent of the gimbal states and exhibits a multi-harmonic structure. Conversely, the gravitational torque $f_{g,i}$, gyroscopic torque $f_{gyro,i}$, and friction torque $f_{d,i}$ depend directly on the system and carrier states: $f_{g,i}$ and $f_{gyro,i}$ vary with the aircraft attitude $(\alpha, \beta)$ and angular velocity $\bm{\omega}_B$, whereas $f_{d,i}$ varies nonlinearly with the joint angular velocity $\dot{\theta}_i$. Accordingly, the total disturbance $d_i$ is decomposed into a periodic component $d_{h,i}$ and an aperiodic component $d_{r,i}$:
\begin{equation}\label{eq:dist_group}
	d_{i} = d_{h,i} + d_{r,i}, \quad
	\left\{
	\begin{aligned}
		d_{h,i} &= - f^{\mathrm{harm}}_{v,i}, \\
		d_{r,i} &= -f_{g,i} - f_{gyro,i} - f_{d,i}\\
&\quad- f^{\mathrm{res}}_{v,i}+\delta_i.
	\end{aligned}
	\right.
\end{equation}

Here $f_{v,i}=f^{\mathrm{harm}}_{v,i}+f^{\mathrm{res}}_{v,i}$, with the bias included in $f^{\mathrm{res}}_{v,i}$. The retained component $d_{h,i}$ is represented by a common phase and integer harmonics, making it suitable for frequency estimation and horizon extrapolation. The residual channel $d_{r,i}$ includes the state-dependent gravity, gyroscopic and friction terms together with the unretained vibration remainder. Its dominant components vary over different temporal scales: $f_{g,i}$ and $f_{gyro,i}$ vary relatively slowly with aircraft attitude maneuvers, whereas $f_{d,i}$ changes rapidly with the joint velocity $\dot{\theta}_i$. The analysis uses bounded amplitudes and sampled residual increments on the operating envelope. Smooth frequency-drift estimates apply between frequency switches; a separate amplitude bound covers predictions across a switch.

\subsection{Constraint Analysis}\label{sec:constraints}

The ISP operates under both physical hardware limits and mission tracking constraints. The physical limits comprise actuator torque saturation and joint mechanical range limits, while the mission requirement imposes a camera FOV constraint so that the target remains within the imaging frame.

The actuator saturation constraint can be represented by the bounded control torque of the motors, where $\bm{\tau}_{\mathrm{min}}$ and $\bm{\tau}_{\mathrm{max}}$ denote the minimum and maximum allowable torque limits:
\begin{equation}\label{eq:constraint_input}
	\bm{\tau}_{\mathrm{min}} \le \bm{\tau} \le \bm{\tau}_{\mathrm{max}}.
\end{equation}

The gimbal mechanical structure limits its angular motion, including $\theta_a, \dot{\theta}_a, \theta_e$ and $\dot{\theta}_e$. Then the gimbal mechanical range constraint can be collectively represented as state constraints, bounded by the minimum limit $\bm{x}_{\mathrm{min}}$ and maximum limit $\bm{x}_{\mathrm{max}}$:
\begin{equation}\label{eq:constraint_state_physical}
	\bm{x}_{\mathrm{min}} \le \bm{x} \le \bm{x}_{\mathrm{max}}.
\end{equation}

With the state variables defined in Section~\ref{sec:kinematics} ($x_1=\theta_a, x_3=\theta_e$), the camera's FOV constraint is expressed as:
\begin{equation}\label{eq:constraint_state_fov}
	\begin{aligned}
		|x_1(t) - \theta_{a,\mathrm{ref}}(t)| &\le \theta_{\mathrm{FOV},a} / 2, \\
		|x_3(t) - \theta_{e,\mathrm{ref}}(t)| &\le \theta_{\mathrm{FOV},e} / 2,
	\end{aligned}
\end{equation}
where $\theta_{\mathrm{FOV},i}$ is the angular width of the camera's FOV, and $\theta_{i,\mathrm{ref}}$ is the camera's reference LOS in the corresponding axis.

\section{Composite Anti-Disturbance Control}\label{sec:control_design}

This section develops the disturbance-separated predictive control architecture based on the dynamic model and disturbance decomposition introduced above. To facilitate controller design, the dynamics in \eqref{eq:state_space_physical} are written in the compact control-affine form
\begin{equation}\label{eq:state_space_affine}
	\dot{\bm{x}}(t) = \bm{f}(\bm{x}) + \bm{g}(\bm{x})(\bm{\tau}(t) + \bm{d}(t)),
\end{equation}
where $\bm{\tau}=[\tau_a, \tau_e]^\mathrm{T}$ is the control torque vector and $\bm{d}=[d_a, d_e]^\mathrm{T}$ denotes the total disturbance vector modeled in Section~\ref{sec:disturbance_modeling}. The drift vector $\bm{f}(\bm{x})$ and input matrix $\bm{g}(\bm{x})$ are given by
\begin{equation}\label{eq:affine_matrices}
	\begin{aligned}
	\bm{f}(\bm{x}) &=
	\begin{bmatrix}
		x_2 \\ -J_a^{-1}(C_{11}x_2 + C_{12}x_4) \\ x_4 \\ -J_e^{-1}C_{21}x_2
	\end{bmatrix},\\
	\bm{g}(\bm{x}) &=
	\begin{bmatrix}
		0 & 0 \\ J_a^{-1} & 0 \\ 0 & 0 \\ 0 & J_e^{-1}
	\end{bmatrix}.
	\end{aligned}
\end{equation}

The proposed architecture processes the periodic and aperiodic disturbance components through separate paths, as shown in Fig.~\ref{fig:control_architecture}. The periodic component is \emph{predicted} over the MPC horizon and injected into the prediction model, whereas the aperiodic component is compensated outside the optimizer by certainty equivalence together with the known coupling term $H_i$. For axis $i$ at step $k$, the applied control torque is
\begin{equation}\label{eq:total_control}
	\tau_{i,k} = v_{i,k} - H_{i,k} - \hat{d}^{\mathrm{c}}_{r,i,k},
\end{equation}
where $v_{i,k}$ is the nominal command from the MPC in Section~\ref{sec:mpc}, $H_{i,k}=H_i(\bm{x}_k)$ is evaluated at the measured state, and $\hat{d}^{\mathrm{c}}_{r,i,k}$ is the bounded aperiodic estimate of the LSTM-DO in Section~\ref{sec:lstm_do}. The periodic prediction $\hat{d}_{h,i}(k+j|k)$ of the APDO in Section~\ref{sec:apdo} is used only inside the MPC. With $H_i$ compensated, the two axes are decoupled and are treated separately; in the rest of this section the axis subscript $i$ is dropped where no confusion arises, and $T_s$ denotes the sampling period. The observers and the MPC use a constant nominal inertia $J$; for the azimuth axis, $M_{11}$ depends on $\theta_e$, and the torque $(J-M_{11})\ddot\theta_a$ is absorbed into $d_r$, which is bounded on the operating envelope.

\begin{figure*}[!htbp]
	\centering
	\includegraphics[width=\textwidth]{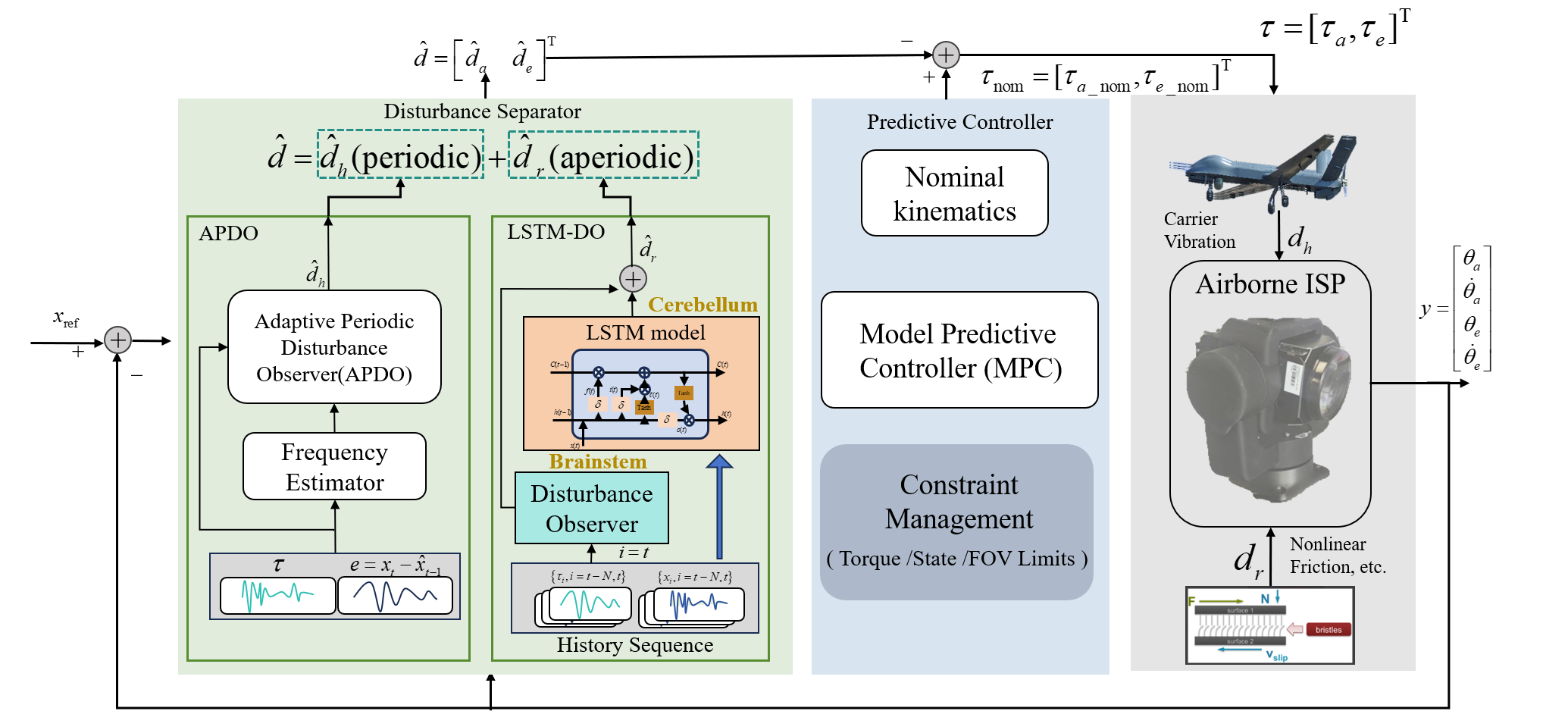}
	\caption{Proposed MPC architecture with physics-informed and data-driven disturbance separation.}
	\label{fig:control_architecture}
\end{figure*}

\subsection{APDO Design}\label{sec:apdo}

As established in Section~\ref{sec:disturbance_modeling}, the periodic component $d_{h,i}$ has a multi-harmonic structure whose fundamental frequency varies with the flight state. Although adaptive neural-network methods can estimate periodic disturbances~\cite{maApproximationbasedNussbaumGain2022, yangDOBasedAdaptiveConsensus2023}, constrained predictive control of an airborne gimbal requires online frequency tracking and multi-step prediction of $d_{h,i}$ over the MPC horizon. The proposed APDO therefore combines a frequency estimator with a harmonic predictor. Unlike the gradient-based frequency law in~\cite{yangAdaptivePeriodicdisturbanceObserver2023}, which is nonlinear in the unknown frequency and analyzed for a single sinusoid, the estimator below is linear in the unknown parameter and projects the frequency estimate onto the admissible band at every step.

\textit{Posterior disturbance.} Both observers are driven by the posterior disturbance obtained from the discretized inverse dynamics,
\begin{equation}\label{eq:posterior}
	d^+_{k} = J\,\frac{\omega_k-\omega_{k-1}}{T_s} - \tau_{k-1-n_d} - H_{k-1},
\end{equation}
where $\omega=\dot\theta$ and $n_d\ge0$ is the input delay in samples ($n_d=0$ for the model \eqref{eq:state_space_physical}).

\textit{Frequency estimation.} Passing $d^+_k$ through a second-order resonator centered at the geometric midpoint of the admissible band $[\Omega_{\min},\Omega_{\max}]$ yields the filtered signal $s_k$. With the integer lag $D=\mathrm{round}\big(\pi/(2\Omega_{\mathrm{nom}}T_s)\big)$ chosen such that $\Omega_{\max}T_sD<\pi$, every sampled sinusoid $s_k=A\cos(\Omega T_sk+\varphi)$ satisfies the exact trigonometric identity
\begin{equation}\label{eq:lin_identity}
	s_k+s_{k-2D}=a\,s_{k-D},\qquad a=2\cos(\Omega T_sD),
\end{equation}
so the unknown enters linearly. Let $m_k=(1-\kappa_m)m_{k-1}+\kappa_m s_{k-D}^2$, $\kappa_m\in(0,1)$, $m_0>0$, and define the normalized pair $y_k=(s_k+s_{k-2D})/\sqrt{m_k}$, $\phi_k=s_{k-D}/\sqrt{m_k}$. Since $m_k\ge\kappa_m s_{k-D}^2$, $\phi_k^2\le\phi_{\max}^2:=1/\kappa_m$ for every input. The estimate is updated by scalar recursive least squares (RLS) with forgetting factor $\lambda\in(0,1)$ followed by projection,
\begin{equation}\label{eq:apdo_freq_estimator}
\begin{aligned}
g_k&=\frac{P_{k-1}\phi_k}{\lambda+P_{k-1}\phi_k^2},\\
\hat a_k&=\Pi_{\mathcal A}\big(\hat a_{k-1}+g_k(y_k-\phi_k\hat a_{k-1})\big),\\
P_k&=\Pi_{[p_{\min},p_{\max}]}\big((P_{k-1}-g_k\phi_kP_{k-1})/\lambda\big),
\end{aligned}
\end{equation}
with $\mathcal A=[a_{\min},a_{\max}]$, where $a_{\min}=2\cos(\Omega_{\max}T_sD)$ and $a_{\max}=2\cos(\Omega_{\min}T_sD)$, $0<p_{\min}<p_{\max}$, and $\hat\Omega_k=\arccos(\hat a_k/2)/(T_sD)$. Because $\Omega_{\max}T_sD<\pi$, the map $\Omega\mapsto a$ is a bijection between $[\Omega_{\min},\Omega_{\max}]$ and $\mathcal A$, so $\hat\Omega_k\in[\Omega_{\min},\Omega_{\max}]$ for every $k$.

\begin{lemma}[Frequency estimation]\label{lem:freq}
Let $a^\star_k=2\cos(\Omega_kT_sD)$ and write $y_k=\phi_ka^\star_k+e_k$, where the equation error $e_k$ collects out-of-band resonator residuals, measurement noise, and filter transients. Suppose $|e_k|\le\bar e$, $|a^\star_{k+1}-a^\star_k|\le\bar\delta_a$, and the normalized regressor is persistently exciting such that $\sum_{l=0}^{N-1}\phi_{k+l}^2\ge\alpha_0>0$ for all $k$. Then, with $\tilde a_k=a^\star_k-\hat a_k$:
(i) $|\tilde a_{k+N}|\le\rho_a|\tilde a_k|+N(\bar g\bar e+\bar\delta_a)$, where
$\rho_a=\exp\!\big(-p_{\min}\alpha_0/(\lambda+p_{\max}\phi_{\max}^2)\big)<1$ and $\bar g=\tfrac12\sqrt{p_{\max}/\lambda}$; hence
$\limsup_{k\to\infty}|\tilde a_k|\le N(\bar g\bar e+\bar\delta_a)/(1-\rho_a)$, and the convergence is exponential when $\bar e=\bar\delta_a=0$;
(ii) $|\tilde\Omega_k|\le L_f|\tilde a_k|$ with $L_f=\big(2T_sD\sqrt{1-a_{\mathrm{pk}}^2/4}\big)^{-1}$, $a_{\mathrm{pk}}=\max(|a_{\min}|,|a_{\max}|)$;

(iii) regardless of excitation, $\hat\Omega_k\in[\Omega_{\min},\Omega_{\max}]$ and $|\tilde\Omega_k|\le\Omega_{\max}-\Omega_{\min}$ for all $k$.
\end{lemma}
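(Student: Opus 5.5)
We derive a one-step error recursion, iterate it over a window of length $N$ and use persistent excitation to extract the contraction; parts (ii) and (iii) are elementary consequences of the projection and of the inverse-cosine map.

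\emph{One-step error recursion.} Let $\bar a_k=\hat a_{k-1}+g_k(y_k-\phi_k\hat a_{k-1})$ denote the pre-projection update, and substitute $y_k=\phi_ka^\star_k+e_k$. This gives
\[
a^\star_k-\bar a_k=(1-g_k\phi_k)(a^\star_k-\hat a_{k-1})-g_ke_k .
\]
Since $a^\star_k\in\mathcal A$ and $\Pi_{\mathcal A}$ is a nonexpansive projection onto an interval containing $a^\star_k$, we have $|a^\star_k-\hat a_k|\le|a^\star_k-\bar a_k|$. Writing $a^\star_k-\hat a_{k-1}=\tilde a_{k-1}+(a^\star_k-a^\star_{k-1})$ then yields
\[
|\tilde a_k|\le(1-g_k\phi_k)|\tilde a_{k-1}|+|g_k|\bar e+\bar\delta_a ,
\]
using $0\le 1-g_k\phi_k=\lambda/(\lambda+P_{k-1}\phi_k^2)\le1$.

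\emph{Gain bounds.} The AM--GM inequality gives $\lambda+P\phi^2\ge2\sqrt{\lambda P}|\phi|$, hence
\[
|g_k|\le\tfrac12\sqrt{P_{k-1}/\lambda}\le\bar g .
\]
For the contraction factor, $P_{k-1}\in[p_{\min},p_{\max}]$ because of the clipping, and $\phi_k^2\le\phi_{\max}^2$. Therefore
\[
g_k\phi_k=\frac{P_{k-1}\phi_k^2}{\lambda+P_{k-1}\phi_k^2}\ge c\,\phi_k^2,\qquad c=\frac{p_{\min}}{\lambda+p_{\max}\phi_{\max}^2},
\]
and consequently $1-g_k\phi_k\le 1-c\phi_k^2\le \exp(-c\phi_k^2)$.

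\emph{Iteration over the window (part (i)).} We iterate the one-step recursion from $k$ to $k+N$. The homogeneous factor is bounded by
\[
\prod_{l=1}^{N}\exp(-c\,\phi_{k+l}^2)\le\exp(-c\alpha_0)=\rho_a ,
\]
by the persistent-excitation assumption applied to the window starting at $k+1$; the window shift is harmless since the assumption holds for all $k$. The forcing terms are each multiplied by products bounded by $1$, so they contribute at most $N(\bar g\bar e+\bar\delta_a)$. This proves the $N$-step bound. Iterating it over successive blocks gives a geometric series, which yields the limsup bound; within a block the error is bounded by the block-start value plus at most $N(\cdot)$, and this bounded intra-block growth does not affect the limsup. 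When $\bar e=\bar\delta_a=0$ the bound is $\rho_a^{\lfloor k/N\rfloor}|\tilde a_0|$, which is exponential.

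\emph{Part (ii).} On $\mathcal A$ the map $a\mapsto\arccos(a/2)/(T_sD)$ is Lipschitz with derivative magnitude
\[
\bigl(2T_sD\sqrt{1-a^2/4}\bigr)^{-1}.
\]
On the interval $\mathcal A$ this is maximized where $|a|$ is largest, namely at $a_{\mathrm{pk}}$. The mean value theorem applied between $\hat a_k$ and $a^\star_k$, both in $\mathcal A$, then gives $|\tilde\Omega_k|\le L_f|\tilde a_k|$. This also requires $|a_{\mathrm{pk}}|<2$, which holds when $\Omega_{\min}T_sD>0$ and $\Omega_{\max}T_sD<\pi$.

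\emph{Part (iii).} This follows directly from the projection and the bijectivity remark preceding the lemma, together with $\Omega_k\in[\Omega_{\min},\Omega_{\max}]$.

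\emph{Main obstacle.} The delicate point is the window bookkeeping in part (i): the clipping of $P$ must be used to obtain uniform gain bounds without appealing to the usual RLS information-matrix argument. The indexing of the excitation window also has to match the product of the factors $(1-g\phi)$.
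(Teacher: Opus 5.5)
Your proposal is correct and follows essentially the same route as the paper's proof: the pre-projection error recursion with nonexpansive projection onto $\mathcal A\ni a^\star_k$, the clipped-$P$ gain bounds $|g_k|\le\tfrac12\sqrt{P_{k-1}/\lambda}\le\bar g$ and $g_k\phi_k\ge p_{\min}\phi_k^2/(\lambda+p_{\max}\phi_{\max}^2)$, and the exponential product bound over a persistently exciting window; parts (ii) and (iii) likewise come from the $\arccos$ Lipschitz constant and from projection plus bijectivity. One small fix: for the $\limsup$, do not argue via intra-block growth, since that would add an extra $(N-1)(\bar g\bar e+\bar\delta_a)$. Instead note that the $N$-step inequality holds for every starting index, so iterating it along each residue class mod $N$ gives exactly $N(\bar g\bar e+\bar\delta_a)/(1-\rho_a)$.
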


\begin{proof}
Substituting $y_k=\phi_ka^\star_k+e_k$ into \eqref{eq:apdo_freq_estimator}, the pre-projection error obeys $\tilde a^-_{k}=(1-g_k\phi_k)\big(\tilde a_{k-1}+a^\star_k-a^\star_{k-1}\big)-g_ke_k$. Because $P_{k-1}\in[p_{\min},p_{\max}]$, $g_k\phi_k\in[0,1)$, $|g_k|\le\bar g=\tfrac12\sqrt{p_{\max}/\lambda}$, and projection is non-expansive, iterating over one window with $\ln(1-x)\le-x$ gives the window inequality; iterating it yields the ultimate bound. Property (ii) follows from the Lipschitz constant of $\arccos(a/2)$ on $\mathcal A$, and (iii) from the projection and bijection. For details see Supplement~\ref{S-S05}.
\end{proof}

For a slowly drifting frequency, $\bar\delta_a\le2T_sD\cdot T_s\delta_\Omega$ with $\delta_\Omega=\sup|\dot\Omega|$. Property (iii) holds even when the periodic disturbance vanishes. In that case, normalization amplifies noise, but the interval projection keeps $\hat\Omega_k$ within the admissible band and the amplitude projection below keeps the harmonic prediction bounded.

\textit{Harmonic prediction.} Let $\hat\psi_k=\hat\psi_{k-1}+\hat\Omega_kT_s$ and $\bm\varphi(\psi)=[\cos\psi,\sin\psi,\ldots,\cos n_h\psi,\sin n_h\psi]^{\mathrm T}$. Write the retained disturbance as $d_h(t)=\bm\varphi(\psi(t))^{\mathrm T}\bm b(t)$, where $\dot\psi=\Omega(t)$ and $\|\bm b(t)\|\le A_{\max}$. Rotating the coefficient vector to the phase reference $\hat\psi_k$ gives
$d_h(t_{k+j})=\bm\varphi(\hat\psi_k+\psi^\star_{k,j})^{\mathrm T}\bm\vartheta^\star_{k,j}$, with $\psi^\star_{k,j}=\int_{t_k}^{t_{k+j}}\Omega(t)\,dt$ and $\|\bm\vartheta^\star_{k,j}\|\le A_{\max}$. On a smooth segment, assume $\|\bm\vartheta^\star_{k,j}-\bm\vartheta^\star_{k,0}\|\le jT_s\delta_b$; constant amplitudes give $\delta_b=0$. The harmonic coefficients are fitted to $d^+_k$ by RLS, with the regressors passed through the same posterior low-pass filter. An additional constant regressor absorbs slow components but is excluded from prediction. The nonconstant coefficients are projected onto $\|\hat{\bm\vartheta}_k\|\le A_{\max}$, and
\begin{equation}\label{eq:apdo_q_filter}
\hat d_h(k+j|k)=\bm\varphi(\hat\psi_k+j\hat\Omega_kT_s)^{\mathrm T}\hat{\bm\vartheta}_k.
\end{equation}

\begin{lemma}[Prediction error]\label{lem:pred}
Let $\tilde{\bm\vartheta}_k=\bm\vartheta^\star_{k,0}-\hat{\bm\vartheta}_k$, $C_h=\sqrt{n_h}A_{\max}$ and $c_n=(\sum_{l=1}^{n_h}l^2)^{1/2}$. If the prediction interval is smooth as in Assumption~\ref{ass:A1}, then
\begin{equation}\label{eq:eps_h}
\begin{gathered}
|d_h(t_{k+j})-\hat d_h(k+j|k)|\le\varepsilon_h(k,j),\\
\begin{aligned}
\varepsilon_h(k,j)&:=\min\{B_h(k,j),\,2C_h\},\\
B_h(k,j)&=\sqrt{n_h}\big(\|\tilde{\bm\vartheta}_k\|+jT_s\delta_b\big)\\
&\quad+c_nA_{\max}\big(jT_s|\tilde\Omega_k|+\tfrac12(jT_s)^2\delta_\Omega\big).
\end{aligned}
\end{gathered}
\end{equation}
Across a frequency or coefficient switch, $\varepsilon_h(k,j)=2C_h$ remains valid. In particular, the error is uniformly bounded independently of convergence; the expression $B_h(k,j)$ need not itself be below $2C_h$.
\end{lemma}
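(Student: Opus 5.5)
The bound follows from an add-and-subtract decomposition. The truth is
$d_h(t_{k+j})=\bm\varphi(\hat\psi_k+\psi^\star_{k,j})^{\mathrm T}\bm\vartheta^\star_{k,j}$ and the prediction is $\bm\varphi(\hat\psi_k+j\hat\Omega_kT_s)^{\mathrm T}\hat{\bm\vartheta}_k$. Inserting the intermediate term $\bm\varphi(\hat\psi_k+\psi^\star_{k,j})^{\mathrm T}\bm\vartheta^\star_{k,0}$ splits the error into three parts.
\[
\text{(a) } \bm\varphi(\cdot)^{\mathrm T}(\bm\vartheta^\star_{k,j}-\bm\vartheta^\star_{k,0}),\qquad
\text{(b) } \bm\varphi(\hat\psi_k+\psi^\star_{k,j})^{\mathrm T}\tilde{\bm\vartheta}_k,
\]
\[
\text{(c) } \big[\bm\varphi(\hat\psi_k+\psi^\star_{k,j})-\bm\varphi(\hat\psi_k+j\hat\Omega_kT_s)\big]^{\mathrm T}\hat{\bm\vartheta}_k .
\]
Since $\|\bm\varphi(\psi)\|=\sqrt{n_h}$ for every $\psi$ (each of the $n_h$ cosine--sine pairs has unit norm), Cauchy--Schwarz bounds (a) by $\sqrt{n_h}\,jT_s\delta_b$, using the smooth-segment coefficient-drift assumption. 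The same argument bounds (b) by $\sqrt{n_h}\|\tilde{\bm\vartheta}_k\|$. Together these give the first line of $B_h$.

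For (c), I compare the harmonics pair by pair. For the $l$-th pair, $(\cos l\psi,\sin l\psi)$ rotated by an angle difference $\Delta$ changes by at most $l|\Delta|$ in Euclidean norm, since $2|\sin(l\Delta/2)|\le l|\Delta|$. Writing $\hat{\bm\vartheta}_k$ in its $n_h$ two-dimensional blocks $\hat{\bm\vartheta}_k^{(l)}$ and applying Cauchy--Schwarz across blocks gives
\[
|\text{(c)}|\le\sum_l l|\Delta|\,\|\hat{\bm\vartheta}_k^{(l)}\|\le c_n|\Delta|\,\|\hat{\bm\vartheta}_k\|\le c_nA_{\max}|\Delta|,
\]
where the last step uses the amplitude projection. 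The phase mismatch is
\[
\Delta=\psi^\star_{k,j}-j\hat\Omega_kT_s=\int_{t_k}^{t_{k+j}}(\Omega(t)-\hat\Omega_k)\,dt .
\]
I write $\Omega(t)-\hat\Omega_k=\tilde\Omega_k+(\Omega(t)-\Omega(t_k))$, with $\tilde\Omega_k=\Omega(t_k)-\hat\Omega_k$. On a smooth interval (Assumption~\ref{ass:A1}), $|\Omega(t)-\Omega(t_k)|\le\delta_\Omega(t-t_k)$. Integrating gives $|\Delta|\le jT_s|\tilde\Omega_k|+\tfrac12(jT_s)^2\delta_\Omega$, which yields the second line of $B_h$. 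Part (ii) of Lemma~\ref{lem:freq} could further relate $|\tilde\Omega_k|$ to $|\tilde a_k|$, but that is not needed here.

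The uniform bound $2C_h$ comes from the triangle inequality, with no smoothness needed. The truth satisfies $|d_h|\le\|\bm\varphi\|\|\bm\vartheta^\star_{k,j}\|\le\sqrt{n_h}A_{\max}=C_h$ because $\|\bm\vartheta^\star_{k,j}\|\le A_{\max}$. The prediction satisfies the same bound because the nonconstant coefficients are projected onto the $A_{\max}$ ball and the constant regressor is excluded from prediction. Hence the error is at most $2C_h$ in all cases, including across frequency or coefficient switches where $\delta_\Omega,\delta_b$ are undefined. Taking the minimum with $B_h$ gives $\varepsilon_h$.

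The main obstacle is bookkeeping, namely making sure the decomposition does not double count. The delicate points are keeping the phase reference $\hat\psi_k$ common to truth and prediction (guaranteed by the rotation convention defining $\bm\vartheta^\star_{k,j}$) and checking that the convention for $\tilde\Omega_k$ matches the one in Lemma~\ref{lem:freq}. I would therefore first verify that the rotated representation absorbs any accumulated phase error of $\hat\psi_k$ into $\bm\vartheta^\star_{k,0}$, so that only the frequency mismatch over the horizon appears in (c).
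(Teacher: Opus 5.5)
Your proposal is correct and follows essentially the same route as the paper: add and subtract at the true phase using $\bm\vartheta^\star_{k,0}$, bound the coefficient and drift terms via $\|\bm\varphi\|=\sqrt{n_h}$, bound the phase term harmonic by harmonic ($l|\Delta|$ per block, then Cauchy--Schwarz to get $c_nA_{\max}$), and take the minimum with the amplitude bound $2C_h$. Your three-way split also implicitly inserts $\bm\varphi(\hat\psi_k+\psi^\star_{k,j})^{\mathrm T}\hat{\bm\vartheta}_k$, which is fine; the explicit integration of $\Omega(t)-\hat\Omega_k$ and the use of the projection $\|\hat{\bm\vartheta}_k\|\le A_{\max}$ in term (c) just spell out details the paper leaves terse.
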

\begin{proof}
Add and subtract the prediction using $\bm\vartheta^\star_{k,0}$ and the true phase. The coefficient and variation terms are bounded by $\sqrt{n_h}(\|\tilde{\bm\vartheta}_k\|+jT_s\delta_b)$ because $\|\bm\varphi\|=\sqrt{n_h}$. The phase discrepancy is at most $jT_s|\tilde\Omega_k|+(jT_s)^2\delta_\Omega/2$; Cauchy--Schwarz gives $c_nA_{\max}$ times this bound. Independently, $|d_h|\le C_h$ and $|\hat d_h|\le C_h$, so the error is at most $2C_h$. Taking the minimum proves \eqref{eq:eps_h}; only the amplitude argument is needed across switches. For details see Supplement~\ref{S-S05}.
\end{proof}
The coefficient error includes the effect of fitting the posterior signal in the presence of residual disturbances; projection bounds its magnitude without requiring exact separation or RLS convergence. Frequency and coefficient convergence~\cite{ioannouRobustAdaptiveControl1996} can tighten the local bound, whereas the amplitude bound alone supplies the uniform bound used below.

\textit{Drift weighting.} Recent movement of the frequency estimate is used as a proxy for unreliable extrapolation. Define $\Delta\hat f_k=|\hat\Omega_k-\hat\Omega_{k-L}|/(2\pi)$ and
\begin{equation}\label{eq:apdo_rho}
\begin{aligned}
\hat d^{\rho}_h(k+j|k)&=\rho_{k,j}\hat d_h(k+j|k),\\
\rho_{k,j}&=\mathrm{sat}_{[0,1]}\big(1-\kappa\,2\pi\Delta\hat f_k(j+n_d)T_s\big),
\end{aligned}
\end{equation}
where $\kappa\ge0$ is the drift gain; $\kappa_m$ denotes the normalization factor in the frequency estimator. The unweighted prediction is recovered for $\kappa=0$ or a stationary frequency estimate. For every $\rho_{k,j}\in[0,1]$, the triangle inequality yields
\begin{equation}\label{eq:eps_h_rho}
\begin{aligned}
|d_h(t_{k+j})-\hat d^{\rho}_h(k+j|k)|&\le\varepsilon_h^\rho(k,j),\\
\varepsilon_h^\rho(k,j)&:=\rho_{k,j}\varepsilon_h(k,j)+(1-\rho_{k,j})C_h\\
&\le2C_h.
\end{aligned}
\end{equation}
This weighted bound is smaller than $\varepsilon_h(k,j)$ when $\rho_{k,j}<1$ and $\varepsilon_h(k,j)>C_h$. This is a comparison of bounds: attenuation can increase or decrease the actual pointwise error. The frequency-drift statistic does not certify the unknown phase error. Its role is to reduce the magnitude of far-horizon predictions during frequency-estimation transients; the performance effect is evaluated in Section~\ref{sec:results}. The reported simulations use $\kappa=4$ and $L=50$ (1\,s), selected on tuning seeds.

\subsection{LSTM-DO Design}\label{sec:lstm_do}

As established in Section~\ref{sec:disturbance_modeling}, the aperiodic disturbance $d_{r,i}$ contains both slowly varying components (gravity and gyroscopic torques) and rapidly varying components (nonlinear friction). To capture such mixed-rate dynamics, an LSTM network is constructed to learn $d_{r,i}$ as the aperiodic disturbance. The current-value feedforward branch is optional. A separate frozen network predicts the residual disturbance change over the input delay, as described under lead correction below.

As established in Section~\ref{sec:disturbance_modeling}, the aperiodic disturbance $d_{r,i}$ contains both slowly varying components (gravity and gyroscopic torques) and rapidly varying components (nonlinear friction). An LSTM network is constructed to learn $d_{r,i}$ as the aperiodic disturbance. The current-value feedforward branch is optional. A separate frozen network predicts the residual disturbance change over the input delay, as described under lead correction below.

The feature vector at the $k$-th time step stacks historical system states and control inputs:
\begingroup
\setlength{\abovedisplayskip}{3pt}
\setlength{\belowdisplayskip}{3pt}
\begin{equation}\label{eq:lstm_input}
	\bm{\psi}_k = [ \bm{\phi}_k^\mathrm{T}, \bm{\phi}_{k-1}^\mathrm{T}, \bm{\phi}_{k-2}^\mathrm{T}, \bm{\phi}_{k-3}^\mathrm{T} ]^\mathrm{T},
\end{equation}
\endgroup
where $\bm{\phi}_k=\big[\theta_{a,k},\ \dot{\theta}_{a,k},\ \theta_{e,k},\ \dot{\theta}_{e,k}, \tau_{a,k-1},\ \tau_{e,k-1},\ \sin\theta_{a,k},\allowbreak\!\sin\theta_{e,k},\cos\theta_{a,k},\ \cos\theta_{e,k}\big]^\mathrm{T}\in\mathbb{R}^{10}$. Here $\theta_{i,k}$ and $\dot{\theta}_{i,k}$ represent the joint angle and angular velocity, $\tau_{i,k-1}$ is the torque applied at the previous step, and sine/cosine components capture gravity-relative torque dynamics.

For the current-value feedforward branch, offline training seeks a parameter set $\Theta$ such that the network output $\bm{\hat{d}}_{\mathrm{LSTM}}=[\hat{d}_{\mathrm{LSTM},a}, \hat{d}_{\mathrm{LSTM},e}]^\mathrm{T}$ approximates the two-axis aperiodic disturbance $\bm{d}_r=[d_{r,a},d_{r,e}]^\mathrm{T}$. Standard LSTM gating mechanisms and the parameter set $\Theta$ are detailed in Supplement~\ref{S-S03}. At the output layer, the spectrally normalized matrix $\bar{\mathbf{W}}_{\mathrm{out}}=\mathbf{W}_{\mathrm{out}}/\|\mathbf{W}_{\mathrm{out}}\|_2$ maps the hidden state to the disturbance estimate. The training target $\bm{d}_{r,k}^+$ is obtained from the posterior $d^+_i$ and the APDO estimates during offline construction.

The network is trained offline and kept frozen during online control. Because the hidden state satisfies $\|\mathbf h_k\|_\infty<1$ and spectral normalization ensures bounded raw output for any input sequence, each output channel is further projected in both magnitude and step increment:
\begin{equation}\label{eq:lstm_proj}
	\hat d_{L,k}=\Pi_{[\hat d_{L,k-1}-\delta_LT_s,\ \hat d_{L,k-1}+\delta_LT_s]}\Big(\mathrm{sat}_{d_L}\big(\hat d_{\mathrm{LSTM},k}\big)\Big),
\end{equation}
where $d_L>0$ and $\delta_L>0$ are the admissible magnitude and rate of the learned feedforward.

To account for disturbance variations not captured by offline training, a linear disturbance observer provides online correction around $\hat d_{L,k}$. In continuous time, the observer dynamics are $\dot{\hat\omega}=J^{-1}(\tau+H+\hat d_h+\hat d_L+z)+l_0(\omega-\hat\omega)$ and $\dot z=-l_2z+L_2(\omega-\hat\omega)$. Online implementation uses forward-Euler discretization with $n_s$ sub-steps of step size $h=T_s/n_s$, holding the sampled angular velocity $\omega_k$ constant across each sampling period:
\begin{equation}\label{eq:lstm_do_dynamics}
\left\{\begin{aligned}
\hat\omega^{+}&=\hat\omega+h\big[J^{-1}\big(\tau_{k-n_d}+H_k+\hat d_{h,k}+\hat d_{L,k}+z\big)\\
&\hspace{5em}+l_0(\omega_k-\hat\omega)\big],\\
z^{+}&=z+h\big[-l_2z+L_2(\omega_k-\hat\omega)\big],\\
\hat d_{r,k}&=\hat d_{L,k}+z_k,\\
\hat d^{\mathrm c}_{r,k}&=\mathrm{sat}_{\bar d}(\hat d_{r,k}+\Delta_k),
\end{aligned}\right.
\end{equation}
where $\hat\omega$ is the estimated angular velocity, $l_0$ is the velocity observer gain, $z$ is the online correction state, $l_2$ is its leakage rate, and $L_2$ is the disturbance correction gain. The control law applies the saturated estimate $\hat d^{\mathrm c}_{r,k}$ with bound $\bar d$, whereas the internal observer loop updates using the unsaturated estimate $\hat d_{r,k}$ without the lead term $\Delta_k$. The lead correction $\Delta_k$ is defined below, with $\Delta_k\equiv0$ recovering standard certainty-equivalence compensation.

\begin{lemma}[Discrete LSTM-DO error]\label{lem:do}
Let $e_{L,k}=d_{r,k}-\hat d_{L,k}$, $\tilde d_{r,k}=d_{r,k}-\hat d_{r,k}=e_{L,k}-z_k$, $\tilde\omega_k=\omega_k-\hat\omega_k$ and $\bm\zeta_k=[\tilde\omega_k,\tilde d_{r,k}]^{\mathrm T}$. Under Assumption~\ref{ass:A2},
\begin{equation}\label{eq:error_system}
\begin{aligned}
\bm\zeta_{k+1}&=\bm A_d\bm\zeta_k+\bm\eta_k,\\
\bm A_d&=(\bm I+h\bm A_c)^{n_s},\\
\bm A_c&=\begin{bmatrix}-l_0&J^{-1}\\-L_2&-l_2\end{bmatrix},
\end{aligned}
\end{equation}
\begin{equation}\label{eq:eta_bound}
\begin{aligned}
|\eta_{1,k}|&\le c_1T_s\,\bar\varepsilon_{h,0}^{\mathrm{obs}}+c_2T_s^2+\bar\eta_1^{\mathrm{mod}},\\
|\eta_{2,k}|&\le c_3\big(|e_{L,k+1}-e_{L,k}|+l_2T_s|e_{L,k}|\big)\\
&\quad+c_hT_s^2\bar\varepsilon_{h,0}^{\mathrm{obs}}+c_4T_s^2+\bar\eta_2^{\mathrm{mod}},
\end{aligned}
\end{equation}
where $\bar\varepsilon_{h,0}^{\mathrm{obs}}$ bounds the error of the current periodic estimate actually supplied to the observer (raw in the flight implementation, weighted in the simulation), and $c_1,\dots,c_4,c_h$ depend on $J,l_0,l_2,L_2,n_s$ and on the stated smoothness bounds on the envelope. The $c_h$ term accounts for periodic-estimate forcing propagated into the disturbance-error component across sub-steps. The additional $\bar\eta_i^{\mathrm{mod}}$ bound model and measurement discrepancies in the sampled observer recursion; they vanish for the exact nominal model. At discontinuities, the corresponding finite sampled remainder is included in this forcing budget. If $\bm A_c$ is Hurwitz ($l_0+l_2>0$, $l_0l_2+J^{-1}L_2>0$) and $h$ is such that $\bm I+h\bm A_c$ is Schur, then $\bm A_d$ is Schur and, with the element-wise gains $\gamma_{pq}=\sum_{l\ge0}\big|[\bm A_d^l]_{pq}\big|<\infty$,
\begin{equation}\label{eq:lstm_do_bound}
	\limsup_{k\to\infty}|\tilde d_{r,k}|\le\varepsilon_d:=\gamma_{21}\sup_k|\eta_{1,k}|+\gamma_{22}\sup_k|\eta_{2,k}|.
\end{equation}
\end{lemma}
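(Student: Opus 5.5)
I will follow the sub-steps of the discretized observer \eqref{eq:lstm_do_dynamics} over one sampling period and work in the error coordinates $(\tilde\omega,\tilde d_r)$. Since $\omega_k$ is held constant over the period, I first write the true velocity over $[t_k,t_{k+1}]$ from \eqref{eq:state_space_physical} with the nominal inertia $J$:
\[
\omega(t)=\omega_k+\int_{t_k}^{t}J^{-1}\big(\tau_{k-n_d}+H+d_h+d_r\big)\,ds .
\]
Here the inertia mismatch $(J-M_{11})\ddot\theta_a$ is absorbed into $d_r$, as stated before the lemma. Consider one sub-step and define $\tilde\omega=\omega_k-\hat\omega$ at the sub-step level. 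The $\tau$ and $H_k$ terms cancel in $\tilde\omega^+-\tilde\omega$, up to the sampled discrepancy that I will collect into $\bar\eta^{\mathrm{mod}}$. What remains is
\[
h\big[-l_0\tilde\omega+J^{-1}(d_r-\hat d_L-z)+J^{-1}(d_h-\hat d_{h,k})\big],
\]
plus the intra-period variation of $\omega$, $d_h$ and $d_r$. The $z$-update, rewritten with $\tilde d_r=e_L-z$ at fixed $e_{L,k}$, reads
\[
\tilde d_r^+=\tilde d_r+h\big[-L_2\tilde\omega-l_2\tilde d_r\big]+hl_2e_{L,k}.
\]
So each sub-step has the form $\bm\zeta^+=(\bm I+h\bm A_c)\bm\zeta+h\bm w$, where $\bm w$ has first component $J^{-1}(d_h-\hat d_{h,k})$ plus smoothness remainders, and second component $l_2e_{L,k}$.

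Next I compose the $n_s$ sub-steps. The sampled endpoint is $\bm\zeta_{k+1}$, with $\tilde d_{r,k+1}$ re-referenced to $e_{L,k+1}$ by adding $e_{L,k+1}-e_{L,k}$ in the second component. This gives \eqref{eq:error_system} with
\[
\bm\eta_k=\sum_{m=0}^{n_s-1}(\bm I+h\bm A_c)^{n_s-1-m}h\bm w_m+\big[0,\ e_{L,k+1}-e_{L,k}\big]^{\mathrm T}+\bm\eta^{\mathrm{mod}} .
\]
The bounds \eqref{eq:eta_bound} then follow by bounding the transition-matrix entries by constants that depend only on $h\bm A_c$ and $n_s$.

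\begin{itemize}
\item \emph{First component.} The periodic-estimate error contributes $c_1T_s\bar\varepsilon^{\mathrm{obs}}_{h,0}$, since $n_sh=T_s$. The intra-period drift of the true signals contributes $c_2T_s^2$, using bounded derivatives on the envelope from Assumption~\ref{ass:A2}.
\item \emph{Second component.} The periodic error enters $\tilde d_r$ only through the off-diagonal entry $-hL_2$ at a later sub-step, which gives $c_hT_s^2$. The leakage gives $c_3l_2T_s|e_{L,k}|$, and the re-referencing gives $c_3|e_{L,k+1}-e_{L,k}|$.
\end{itemize}

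The main obstacle is the bookkeeping of this second component. I must make sure the $e_L$ increment appears once with an $O(1)$ coefficient, and that the periodic forcing enters at order $T_s^2$ rather than $T_s$. I will check this by expanding $(\bm I+h\bm A_c)^{n_s-1-m}$ to first order in $h$: its $(2,1)$ entry is $O(n_sh)$, and it multiplies $h$ times the first component of $\bm w$.

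Finally, the stability claim. If $\bm I+h\bm A_c$ is Schur, its power $\bm A_d$ is Schur, since the spectral radius is raised to the power $n_s$. The Hurwitz conditions are just the trace and determinant conditions for $\bm A_c$. They are stated because, for small enough $h$, they guarantee the sub-step matrix is Schur.

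Schur stability gives $\|\bm A_d^l\|\le c\mu^l$ with $\mu<1$, so $\gamma_{pq}<\infty$. Iterating the recursion,
\[
\tilde d_{r,k}=[\bm A_d^k\bm\zeta_0]_2+\sum_{l=0}^{k-1}\big([\bm A_d^l]_{21}\eta_{1,k-1-l}+[\bm A_d^l]_{22}\eta_{2,k-1-l}\big).
\]
The first term decays. Bounding each $\eta$ by its supremum and summing the absolute entries then yields \eqref{eq:lstm_do_bound}.
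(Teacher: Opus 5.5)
Your proposal follows the paper's proof essentially step for step: sub-step error equations with matrix $\bm I+h\bm A_c$, composition into $\bm\eta_k=\sum_s(\bm I+h\bm A_c)^{n_s-1-s}\bm r_{k,s}$ with componentwise bounds, then the Schur convolution and triangle inequality on the second component; freezing $e_L$ at $e_{L,k}$ and re-referencing once at the end is only a bookkeeping variant of the paper's per-sub-step term $e_L^{+}-(1-hl_2)e_L$. One slip to fix: the cancellation of $\tau$ and $H$ you invoke requires $\tilde\omega=\omega(t)-\hat\omega$ with the true velocity at the sub-step time, not $\omega_k-\hat\omega$, and the hold error $\omega_k-\omega(t)$ then enters both the $l_0$ and $L_2$ correction terms, contributing $O(T_s^2)$ to each component (absorbed into $c_2T_s^2$ and $c_4T_s^2$).
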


\begin{proof}
Over a single sub-step, subtracting \eqref{eq:lstm_do_dynamics} from the exact plant increment and substituting $d-\hat d_{h,k}-\hat d_{r}=\tilde d_r+(d_h-\hat d_{h,k})$ yields the error system \eqref{eq:error_system} with forcing $\bm\eta_k$ and the stated bounds \eqref{eq:eta_bound}. The detailed sub-step expansion is in Supplement~\ref{S-S04}. Because $\bm A_d$ is Schur, $\bm\zeta_k=\bm A_d^k\bm\zeta_0+\sum_{l<k}\bm A_d^{k-1-l}\bm\eta_l$, and taking the second component via the triangle inequality gives \eqref{eq:lstm_do_bound}. For finite times, the bound includes $|[\bm A_d^{k-k_0}\bm\zeta_{k_0}]_2|$ in addition to $\varepsilon_d$. Element-wise gains are used rather than a single induced matrix norm because $\tilde\omega$ and $\tilde d_r$ have different physical units.
\end{proof}

With the bandwidth parameterization~\cite{gaoScalingBandwidthParameterization2003} $l_0=2\omega_o$, $l_2=0$ and $L_2=J\omega_o^2$, $\bm A_c$ has the double eigenvalue $-\omega_o$ and the discrete Schur condition is $0<h\omega_o<2$. Let $\delta_e$ bound the sampled rate of $e_L$. By \eqref{eq:lstm_proj}, $|e_{L,k+1}-e_{L,k}|\le T_s\delta_e\le T_s(\delta_r+\delta_L)$. The resulting bandwidth scaling and numerical element-wise gain checks are given in Supplement~\ref{S-S07}; the approximation there is restricted to the small-step regime.
\begin{remark}[Role of learning]\label{rem:learning}
Lemma~\ref{lem:do} holds for any frozen network, because only the architectural bound and the projections \eqref{eq:lstm_proj} are used; no time derivative of the network output is required. Thus, the observer-error bound holds for any frozen network satisfying these bounds. Training tightens the increment-dependent term when it reduces $\delta_e$; pointwise estimation accuracy alone does not imply a smaller increment bound. With $\hat d_L\equiv0$, $\delta_e=\delta_r$ and the scheme reduces to the linear DO. The rate projection gives $\delta_e\le\delta_r+\delta_L$, capping this learned-signal contribution to the observer forcing. The periodic and model-forcing budgets remain in \eqref{eq:lstm_do_bound}; the tracking effect is evaluated in Section~\ref{sec:res_learning}.
\end{remark}

\textit{Lead correction.} Because the command computed at step $k$ reaches the actuator at step $k+n_d$, the disturbance acting at execution time is $d_{r,k+n_d}$ rather than $d_{r,k}$. The delay-induced increment $D^\star_k:=d_{r,k+n_d}-d_{r,k}$ is unmeasured online and is predicted as the sum of a linear term and a learned residual. First, the model-based linear term extrapolates the filtered observer increment as $\Delta^{\mathrm{lin}}_k=n_d\,\chi_k$, where $\chi_k=(1-a)\chi_{k-1}+a(\hat d_{r,k}-\hat d_{r,k-1})$ with $a\in(0,1]$. Second, a frozen LSTM sharing the feature vector \eqref{eq:lstm_input} is trained offline to predict the residual $D^\star_k-\Delta^{\mathrm{lin}}_k$. Projecting its output via \eqref{eq:lstm_proj} with magnitude bound $\delta_D$ and rate bound $\delta_D'$ yields the learned lead residual $\hat\epsilon_k$ satisfying $|\hat\epsilon_k|\le\delta_D$. The combined lead correction is
\begin{equation}\label{eq:lead}
	\Delta_k=\mathrm{sat}_{D_{\max}}\big(\Delta^{\mathrm{lin}}_k+\hat\epsilon_k\big).
\end{equation}
\begin{lemma}[Lead error]\label{lem:lead}
Let $\varepsilon_D:=\sup_k|D^\star_k-\Delta_k|$, $M_r=n_dT_s\delta_r$, and $E_{\mathrm{lin}}=\sup_k|D^\star_k-\Delta^{\mathrm{lin}}_k|$. Under Assumption~\ref{ass:A2}, for any frozen network,
\begin{equation}\label{eq:lead_bound}
\varepsilon_D\le
\begin{cases}
\min\{M_r+D_{\max},E_{\mathrm{lin}}+\delta_D\},&D_{\max}\ge M_r,\\
M_r+D_{\max},&D_{\max}<M_r.
\end{cases}
\end{equation}
Without lead correction ($\Delta\equiv0$), $\varepsilon_D\le n_dT_s\delta_r$.
\end{lemma}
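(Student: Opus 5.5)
The plan is to bound $|D^\star_k-\Delta_k|$ pointwise in $k$ and then take the supremum. Two elementary facts are used throughout. First, under Assumption~\ref{ass:A2} the sampled increment of $d_r$ is bounded by $T_s\delta_r$ per step, so telescoping over the $n_d$ delay steps gives $|D^\star_k|=|d_{r,k+n_d}-d_{r,k}|\le n_dT_s\delta_r=M_r$ for all $k$. Second, the saturation in \eqref{eq:lead} gives $|\Delta_k|\le D_{\max}$ for every $k$, and this holds for any frozen network.

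The first bound follows from the triangle inequality: $|D^\star_k-\Delta_k|\le|D^\star_k|+|\Delta_k|\le M_r+D_{\max}$. This holds in both cases of \eqref{eq:lead_bound}, so it already settles the case $D_{\max}<M_r$. Taking $D_{\max}=0$, equivalently $\Delta\equiv0$, gives the final claim $\varepsilon_D\le n_dT_s\delta_r$.

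For the second bound in the case $D_{\max}\ge M_r$, let $u_k=\Delta^{\mathrm{lin}}_k+\hat\epsilon_k$, so that $\Delta_k=\mathrm{sat}_{D_{\max}}(u_k)$. Since $|D^\star_k|\le M_r\le D_{\max}$, the target $D^\star_k$ lies in the interval $[-D_{\max},D_{\max}]$. Saturation is the Euclidean projection onto this convex interval, so it is non-expansive and leaves points of the interval fixed. Hence
\[
|D^\star_k-\Delta_k|=|\mathrm{sat}_{D_{\max}}(D^\star_k)-\mathrm{sat}_{D_{\max}}(u_k)|\le|D^\star_k-u_k|.
\]
Next, apply the triangle inequality with the projected learned residual:
\[
|D^\star_k-u_k|\le|D^\star_k-\Delta^{\mathrm{lin}}_k|+|\hat\epsilon_k|\le E_{\mathrm{lin}}+\delta_D.
\]
Here $|\hat\epsilon_k|\le\delta_D$ follows from the magnitude projection \eqref{eq:lstm_proj} regardless of the network weights. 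Combining this with the first bound and taking the supremum over $k$ gives the minimum in \eqref{eq:lead_bound}.

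The only delicate step is the non-expansiveness argument. It needs $D^\star_k$ to lie inside the saturation set, and this is exactly why the $E_{\mathrm{lin}}+\delta_D$ bound is claimed only when $D_{\max}\ge M_r$. When $D_{\max}<M_r$, clipping can move $\Delta_k$ away from a target outside the interval, and the second bound can fail. I would also confirm that Assumption~\ref{ass:A2} is stated as a per-sample increment bound $|d_{r,k+1}-d_{r,k}|\le T_s\delta_r$; if it is instead a continuous-time rate bound, the mean value theorem gives the same $M_r$. Finally, $E_{\mathrm{lin}}$ is simply taken as defined and may be infinite, in which case the minimum reduces to $M_r+D_{\max}$.
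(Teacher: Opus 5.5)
Your proof is correct and follows the paper's argument. The $M_r+D_{\max}$ term comes from $|D^\star_k|\le n_dT_s\delta_r$ (telescoping the per-sample increment bound of Assumption~\ref{ass:A2}) together with $|\Delta_k|\le D_{\max}$, and when $D_{\max}\ge M_r$ the second term comes from non-expansiveness of the saturation onto an interval containing $D^\star_k$, followed by the triangle inequality with $|\hat\epsilon_k|\le\delta_D$.
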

\begin{proof}
$|D^\star_k|\le n_dT_s\delta_r$ by Assumption~\ref{ass:A2} and $|\Delta_k|\le D_{\max}$, which gives the first term. If $D_{\max}\ge n_dT_s\delta_r$, then $D^\star_k\in[-D_{\max},D_{\max}]$ and the saturation in \eqref{eq:lead} is the projection onto an interval containing $D^\star_k$, which is non-expansive: $|D^\star_k-\Delta_k|\le|D^\star_k-\Delta^{\mathrm{lin}}_k-\hat\epsilon_k|\le|D^\star_k-\Delta^{\mathrm{lin}}_k|+\delta_D$.
\end{proof}
Because $\hat d_{r,k}$ is bounded by Lemma~\ref{lem:do} and $\chi_k$ is generated by a stable filter of its increments, $\sup_k|D^\star_k-\Delta^{\mathrm{lin}}_k|$ is finite and independent of the network. As in Remark~\ref{rem:learning}, the learned residual enters the worst-case bound only through $\delta_D$. An untrained or mismatched network can increase the bound by at most $\delta_D$, whereas accurate training reduces $\varepsilon_D$ below $\sup_k|D^\star_k-\Delta^{\mathrm{lin}}_k|$ only on data for which the network is accurate. Moreover, because $\Delta_k$ enters neither the observer recursion nor the MPC prediction model, it leaves Lemma~\ref{lem:do} and Proposition~\ref{prop:feas} unchanged.

The APDO and LSTM-DO form the disturbance separator. Its prediction--correction organization is inspired by biological internal-model and reflex coordination~\cite{kawatoInternalModelsMotor1999a}.

\subsection{Model Predictive Control (MPC)}\label{sec:mpc}

The nominal-history state comprises $\theta_k$, $\omega_k$, and $v_{k-1},\ldots,v_{k-n_d}$. For a chosen linear nominal model, periodic predictions enter as
\begin{equation}\label{eq:mpc_model}
\begin{aligned}
\bm z_{k+j+1|k}&=\bm A\bm z_{k+j|k}+\bm Bv_{k+j|k}+\bm E\hat d_h^\rho(k+j|k),\\
\bm z_{k|k}&=\bm z_k.
\end{aligned}
\end{equation}
Here $\bm E$ acts on the current physical state, whereas $\bm B$ inserts a newly computed command into the nominal-input delay queue. For $n_d>0$,
\[
\bm B=\operatorname{col}(\bm0_2,\bm e_1),\qquad
\bm E=\operatorname{col}(b_\theta,b_\omega,\bm0_{n_d}),
\]
where $\bm e_1$ is the first unit vector of $\mathbb R^{n_d}$ and $[b_\theta,b_\omega]^{\mathrm T}$ is the input column of the two-state zero-delay model. A current disturbance residual therefore enters through $\bm E$; $\bm E=\bm B$ only for $n_d=0$. The aperiodic estimate is compensated outside this prediction model. The torque-model realization cancels $H$ as in \eqref{eq:total_control}. The speed-interface simulation instead retains the identified first-order drift in $\bm A$ and uses $u=v-\hat d_r^{\mathrm c}$; the flight realization cancels $H=-\omega/K$ and uses the corresponding double-integrator nominal model. These implementation conventions and their residuals are specified in Sections~\ref{sec:sim_setup} and~\ref{sec:flight}.

To reserve margin $m\ge0$, impose $|\tau_k+H_k|\le u_{\max}-m$. With \eqref{eq:total_control}, this gives
\begin{equation}\label{eq:Vk}
\begin{aligned}
\mathcal V_k=\big[&\max(-u_{\max},-(u_{\max}-m)+\hat d^{\mathrm c}_{r,k}),\\
&\min(u_{\max},(u_{\max}-m)+\hat d^{\mathrm c}_{r,k})\big].
\end{aligned}
\end{equation}
When the compensated command $u_k=v_k-H_k-\hat d^{\mathrm c}_{r,k}$ has a separate bound $U_{\mathrm a}$, as in flight, the actual hard interval is
\begin{equation}\label{eq:actual_input_box}
\mathcal W_k=\mathcal V_k\cap[H_k+\hat d^{\mathrm c}_{r,k}-U_{\mathrm a},\,
H_k+\hat d^{\mathrm c}_{r,k}+U_{\mathrm a}].
\end{equation}
Write $\mathcal I_k=\mathcal V_k$ for the retained-drift simulation and $\mathcal I_k=\mathcal W_k$ for the compensated realization. All $N_c$ decision components use the interval evaluated at the current sample.

For $\bm V_k=[v_{k|k},\ldots,v_{k+N_c-1|k}]^{\mathrm T}$, define the quadratic tracking cost
\begin{equation}\label{eq:base_cost}
\begin{aligned}
\mathcal J_k(\bm V_k)&=\tfrac12\sum_{j=1}^{N_p}w_j\big[q_\theta(\theta_{k+j|k}-\theta_{\mathrm{ref},k+j})^2\\
&\qquad+q_\omega(\omega_{k+j|k}-\omega_{\mathrm{ref},k+j})^2\big]
+\tfrac r2\|\bm V_k\|^2,
\end{aligned}
\end{equation}
where $q_\theta>0$, $q_\omega\ge0$, $r>0$, $w_j=1$ for $j<N_p$ and $w_{N_p}=1+p$, $p\ge0$. The input is held at $v_{k+N_c-1|k}$ after the control horizon. The terminal multiplier is a scalar stage weight, with no terminal set or Riccati terminal cost.

For the state-soft realization, use relative-angle coordinates and define $e_{\theta,k+j|k}=\theta_{k+j|k}-\theta_{\mathrm{ref},k+j}$, with
\begin{equation}\label{eq:soft_rows}
\begin{aligned}
\bm y_{k,j}&=[\theta_{k+j|k},\omega_{k+j|k},e_{\theta,k+j|k}]^{\mathrm T},\\
\bm\ell&=[\theta_{\min},-\omega_{\max},-\theta_{\mathrm{FOV}}/2]^{\mathrm T},\\
\bm h&=[\theta_{\max},\omega_{\max},\theta_{\mathrm{FOV}}/2]^{\mathrm T}.
\end{aligned}
\end{equation}
The implemented problem is
\begin{equation}\label{eq:mpc_formulation}
\begin{aligned}
\min_{\bm V_k,\,\bm s_{k,j}\ge0}\quad&\mathcal J_k(\bm V_k)+\lambda_s\sum_{j=1}^{N_p}\bm1^{\mathrm T}\bm s_{k,j}\\
\text{s.t.}\quad&\eqref{eq:mpc_model},\quad\bm V_k\in\mathcal I_k^{N_c},\\
&\bm\ell-\bm s_{k,j}\le\bm y_{k,j}\le\bm h+\bm s_{k,j},\\
&v_{k+j|k}=v_{k+N_c-1|k}\quad(j\ge N_c).
\end{aligned}
\end{equation}
Each two-sided row shares one nonnegative slack. The flight implementation uses $3N_p$ slacks and $\lambda_s=1000$ with the one-half normalization in \eqref{eq:base_cost}. The box-only simulation omits the state rows and their penalty. Soft limits penalize predicted violations; they do not guarantee hard mechanical-range or optical-FOV satisfaction. The usual exact-penalty property~\cite{kerriganSoftConstraintsExact2000} requires an appropriate multiplier bound; no universal threshold is asserted for the chosen finite weight.

Writing $\mathcal J_k=\frac12\|\bm M\bm V_k-\bm b_k\|^2$ gives $\bm M^{\mathrm T}\bm M\succeq r\bm I\succ0$, including $q_\omega=0$. The unique unconstrained quadratic minimizer is
\begin{equation}\label{eq:full_sequence_law}
\bm V_k^0=-\bm F_z\bm z_k-\bm F_d\hat{\bm d}_{h,k}+\bm F_r\bm r_k,
\end{equation}
where $\hat{\bm d}_{h,k}=[\hat d_h^\rho(k|k),\ldots,\hat d_h^\rho(k+N_p-1|k)]^{\mathrm T}$. If $\bm b_k=\bm R\bm r_k-\bm P\bm z_k-\bm D\hat{\bm d}_{h,k}$, then $[\bm F_z,\bm F_d,\bm F_r]=(\bm M^{\mathrm T}\bm M)^{-1}\bm M^{\mathrm T}[\bm P,\bm D,\bm R]$. The first rows are $\bm K_z,\bm K_d,\bm K_r$. This affine law is also the constrained optimizer whenever the \emph{complete} sequence $\bm V_k^0$ lies in the hard box and its predicted state rows satisfy the unsoftened bounds: it then minimizes the quadratic part and has zero nonnegative slack cost, for any $\lambda_s>0$.

\begin{remark}
Only periodic predictions enter $\bm b_k$. Where the optimizer coincides with \eqref{eq:full_sequence_law}, the aperiodic compensation enters the plant as an additive residual; holding the aperiodic estimate constant inside the horizon instead introduces an additional optimizer feedforward map. Active bounds or state penalties can modify this mapping, which motivates the explicit margin conditions below.
\end{remark}

\subsection{Stability Analysis}\label{sec:stability}

The input-feasibility result applies throughout the stated operating envelope. The practical tracking result~\cite{jiangInputtostateStabilityDiscretetime2001,limonInputtoStateStabilityUnifying2009} additionally requires input-sequence and predicted-state margins, so that the optimizer coincides with \eqref{eq:full_sequence_law}.

\begin{assumption}\label{ass:A1}
The retained periodic component has a common fundamental $0<\Omega_{\min}\le\Omega\le\Omega_{\max}$, with $\Omega_{\max}T_sD<\pi$, and coefficient norm at most $A_{\max}$. On smooth prediction intervals, $|\dot\Omega|\le\delta_\Omega$ and the coefficient-variation bound $\delta_b$ of Lemma~\ref{lem:pred} holds. Predictions across switches use its uniform amplitude bound.
\end{assumption}
\begin{assumption}\label{ass:A2}
On the operating envelope, $|d_r|\le\bar d_r\le\bar d$, $|d_{r,k+1}-d_{r,k}|\le T_s\delta_r$, and the sampled observer forcing in \eqref{eq:eta_bound} is bounded. Derivative bounds used to obtain that forcing hold on smooth intervals; at jumps a finite sampled-increment bound is used. The nominal sampled plant has an additional bounded additive residual, accounting for model mismatch, intersample effects and implementation errors.
\end{assumption}
\begin{assumption}\label{ass:A3}
$\bar d<u_{\max}-m$. For the compensated realization using \eqref{eq:actual_input_box}, the operating envelope also satisfies $|H_k|\le\bar H$ and $\bar H+\bar d<U_{\mathrm a}$.
\end{assumption}
\begin{assumption}\label{ass:A4}
For the selected nominal model and finite-horizon weights, $\bm A_{\mathrm{cl}}=\bm A-\bm B\bm K_z$ is Schur, with $\bm K_z$ obtained from \eqref{eq:full_sequence_law}.
\end{assumption}
Positive $r$ ensures strict convexity, whereas Assumption~\ref{ass:A4} is a separate numerical or analytic check. The bounds on the operating envelope are assumptions, not consequences of the soft penalties.

\begin{proposition}[Input feasibility and command bounds]\label{prop:feas}
Under Assumption~\ref{ass:A3}, $[-b_0,b_0]\subseteq\mathcal I_k$, where $b_0=u_{\max}-m-\bar d>0$ for $\mathcal V_k$ and $b_0=\min\{u_{\max}-m-\bar d,U_{\mathrm a}-\bar H-\bar d\}>0$ for $\mathcal W_k$. Thus the QP is feasible at every step on the envelope. Its first decision satisfies $|v_k-\hat d^{\mathrm c}_{r,k}|\le u_{\max}-m$; with $\mathcal W_k$, the compensated command also satisfies $|u_k|\le U_{\mathrm a}$.
\end{proposition}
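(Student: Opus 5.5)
The plan is a direct interval computation using only the definitions \eqref{eq:Vk}, \eqref{eq:actual_input_box} and the saturation in \eqref{eq:lstm_do_dynamics}. First, the saturation gives $|\hat d^{\mathrm c}_{r,k}|\le\bar d$ at every step, and Assumption~\ref{ass:A3} gives $\bar d<u_{\max}-m$, so $b_0>0$ in both cases. Note that $\bar d$ here must be the saturation level of the applied estimate, not the true disturbance bound $\bar d_r$. For $\mathcal V_k$, I will bound the lower endpoint by
\[
\max\big(-u_{\max},-(u_{\max}-m)+\hat d^{\mathrm c}_{r,k}\big)\le\max\big(-u_{\max},-(u_{\max}-m)+\bar d\big)=-b_0,
\]
where the last equality uses $-u_{\max}\le-(u_{\max}-m)\le-b_0$ because $m\ge0$. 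The upper endpoint is handled symmetrically and is at least $b_0$. Hence $[-b_0,b_0]\subseteq\mathcal V_k$.

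For $\mathcal W_k$, the second interval is centred at $c_k=H_k+\hat d^{\mathrm c}_{r,k}$ with $|c_k|\le\bar H+\bar d$. Its lower endpoint therefore satisfies $c_k-U_{\mathrm a}\le-(U_{\mathrm a}-\bar H-\bar d)\le-b_0$, and its upper endpoint is likewise at least $b_0$. An intersection of two intervals that both contain $[-b_0,b_0]$ also contains it, which gives $[-b_0,b_0]\subseteq\mathcal W_k$.

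Feasibility then follows from a concrete point. Take $\bm V_k=\bm0$, or any constant sequence in $[-b_0,b_0]$, which lies in $\mathcal I_k^{N_c}$. Propagate \eqref{eq:mpc_model} with the held input, and choose the slacks large enough to satisfy the softened rows, e.g.\ $\bm s_{k,j}=\max\{\bm0,\bm y_{k,j}-\bm h,\bm\ell-\bm y_{k,j}\}$ componentwise. The hold constraints $v_{k+j|k}=v_{k+N_c-1|k}$ only define the post-horizon inputs and impose nothing further. So the feasible set is nonempty. The cost is strictly convex because $\bm M^{\mathrm T}\bm M\succeq r\bm I$, and it is bounded below, so the QP has a unique optimizer. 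In the box-only variant the state rows are simply absent.

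The command bounds come from membership alone. The first decision satisfies $v_k\in\mathcal V_k$, so $-(u_{\max}-m)+\hat d^{\mathrm c}_{r,k}\le v_k\le(u_{\max}-m)+\hat d^{\mathrm c}_{r,k}$, which is $|v_k-\hat d^{\mathrm c}_{r,k}|\le u_{\max}-m$. By \eqref{eq:total_control}, this is the same as $|\tau_k+H_k|\le u_{\max}-m$. With $\mathcal W_k$, the second interval gives $|v_k-H_k-\hat d^{\mathrm c}_{r,k}|\le U_{\mathrm a}$, i.e.\ $|u_k|\le U_{\mathrm a}$.

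The only delicate step is the max/min algebra for the endpoints of $\mathcal V_k$, where $m\ge0$ is needed so that the $\pm u_{\max}$ caps never cut inside $\pm b_0$. I will also state explicitly that the envelope bound $|H_k|\le\bar H$ is assumed, not derived.
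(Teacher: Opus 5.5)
Your proposal is correct and follows the paper's own proof. The saturation bound $|\hat d^{\mathrm c}_{r,k}|\le\bar d$ together with Assumption~\ref{ass:A3} places $[-b_0,b_0]$ inside each interval in \eqref{eq:Vk}--\eqref{eq:actual_input_box}. The point $\bm V_k=\bm0$ with sufficiently large slacks gives feasibility, and membership of the first component gives the command bounds; you simply write out the endpoint algebra that the paper leaves implicit. Your extra existence/uniqueness remark is not needed for the proposition. If you keep it, note that with slacks the objective is strictly convex only in $\bm V_k$, not in the slacks; uniqueness still holds because the optimal slacks are determined by $\bm V_k$.
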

\begin{proof}
The bounded estimate and Assumption~\ref{ass:A3} place the indicated symmetric interval inside every interval in \eqref{eq:Vk}--\eqref{eq:actual_input_box}. Choose $\bm V_k=\bm0$ and slacks large enough to cover the finite predicted state rows. This proves feasibility independently of the reference and of the previous decision; membership of the first component in $\mathcal I_k$ gives the command bounds. Soft state rows require no additional feasibility condition.
\end{proof}
For an implementation outside the additional $H$-headroom envelope, nonemptiness of $\mathcal W_k$ must be checked online. The flight code stops the trial on an empty interval or an uncertified solve, rather than asserting feasibility there. Its integer command bound is checked after rounding.

For the tracking analysis, let $\bm z_k^\star$ be the ideal-information loop that uses \eqref{eq:full_sequence_law} with the exact periodic sequence $\bm d_{h,k}$, exact aperiodic compensation and zero model residual. Its tracking error is determined by the nominal controller and reference. Choose $k_0\ge n_d$ and fixed bounds $\bar\varepsilon_j\ge\sup_{k\ge k_0}\varepsilon_h^\rho(k,j)$, which exist by \eqref{eq:eps_h_rho}, and a finite-time observer bound
$\bar\varepsilon_d=\sup_{l\ge0}|[\bm A_d^l\bm\zeta_{k_-}]_2|+\varepsilon_d$, with $k_-=k_0-n_d$ so that the initial delayed estimates are covered. Define $\varepsilon_h=\sum_j|K_{d,j}|\bar\varepsilon_j$ and $\varepsilon_r=\bar\varepsilon_d+\varepsilon_D$. Projection of the aperiodic compensation gives $|d_{r,k}-\hat d^{\mathrm c}_{r,k-n_d}|\le\varepsilon_r$ by Lemma~\ref{lem:lead}. Collect the remaining bounded nominal-model and implementation residual in $\bm\nu_k$, with $\|\bm\nu_k\|\le\bar\nu$. In particular, delayed $H$ cancellation and command rounding are included in this residual, rather than identified with a measured external torque.

Let $\bm y_k^0$ stack all unsoftened predicted rows generated by \eqref{eq:full_sequence_law}, and let $\bm y_k^\star$ be their ideal-loop values. Prediction condensation gives
\begin{equation}\label{eq:sequence_sensitivity}
\begin{aligned}
\bm V_k^0-\bm V_k^\star&=-\bm F_z\delta\bm z_k-\bm F_d\delta\bm d_k,\\
\bm y_k^0-\bm y_k^\star&=\bm L_z\delta\bm z_k+\bm L_d\delta\bm d_k,
\end{aligned}
\end{equation}
where $\delta\bm d_k=\hat{\bm d}_{h,k}-\bm d_{h,k}$, $\bm L_z=\bm C(\bm\Phi-\bm\Gamma\bm F_z)$ and $\bm L_d=\bm C(\bm\Psi-\bm\Gamma\bm F_d)$; $\bm\Phi,\bm\Gamma,\bm\Psi$ are the stacked prediction matrices, and $\bm C$ selects the mechanical, speed and tracking-error rows. Reference offsets cancel in the difference. For the box-only controller, the second row and the corresponding state conditions below are omitted.

\begin{theorem}[Regional practical tracking bound]\label{thm:main}
Let Assumptions~\ref{ass:A1}--\ref{ass:A4} and the observer-gain conditions of Lemma~\ref{lem:do} hold. Put $\bm{\bar\varepsilon}=[\bar\varepsilon_0,\ldots,\bar\varepsilon_{N_p-1}]^{\mathrm T}$ and
\[
\Gamma_B=\sum_{l\ge0}\|\bm A_{\mathrm{cl}}^l\bm B\|,\quad
\Gamma_E=\sum_{l\ge0}\|\bm A_{\mathrm{cl}}^l\bm E\|,\quad
\Gamma_I=\sum_{l\ge0}\|\bm A_{\mathrm{cl}}^l\|.
\]
Choose $c\ge1$ and $\rho_c\in(0,1)$ with $\|\bm A_{\mathrm{cl}}^l\|\le c\rho_c^l$, and set
$\mathcal B=c\|\delta\bm z_{k_0}\|+\Gamma_B\varepsilon_h+\Gamma_E\varepsilon_r+\Gamma_I\bar\nu$.
Suppose the entire ideal input sequence satisfies $\|\bm V_k^\star\|_\infty\le b_0-\mu_V$ for some $\mu_V>0$, and every ideal predicted state row has distance at least $\mu_i>0$ from both its bounds, for all $k\ge k_0$. If
\begin{equation}\label{eq:region}
\begin{aligned}
\max_i\|(\bm F_z)_{i,:}\|\,\mathcal B+
\big\||\bm F_d|\bm{\bar\varepsilon}\big\|_\infty&<\mu_V,\\
\|(\bm L_z)_{i,:}\|\,\mathcal B+
\big[|\bm L_d|\bm{\bar\varepsilon}\big]_i&<\mu_i\quad\text{for every state row }i,
\end{aligned}
\end{equation}
and the deviation tube of radius $\mathcal B$ stays in the operating envelope, then the exact optimizer agrees with \eqref{eq:full_sequence_law} and its soft slacks are zero. The deviation obeys
\begin{equation}\label{eq:iss}
\begin{aligned}
\|\bm z_k-\bm z_k^\star\|&\le c\rho_c^{k-k_0}\|\delta\bm z_{k_0}\|\\
&\quad+\Gamma_B\varepsilon_h+\Gamma_E\varepsilon_r+\Gamma_I\bar\nu,
\qquad k\ge k_0.
\end{aligned}
\end{equation}
This is a regional input-to-state practical bound for tracking deviation relative to the ideal-information loop, conditional on the complete input-sequence and predicted-state margins. It holds for any frozen projected network for which these error and margin conditions hold.
\end{theorem}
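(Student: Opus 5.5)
The argument is an induction on $k\ge k_0$ that carries two claims together. Call them $(\mathrm{H}_k)$:

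(a) the deviation bound \eqref{eq:iss} holds at step $k$;

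(b) the optimizer at step $k$ coincides with the affine law \eqref{eq:full_sequence_law} and has zero slacks.

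Each half feeds the other. Claim (b) turns the closed loop into a linear recursion driven by bounded errors, and that recursion yields (a) at the next step. Claim (a) keeps $\delta\bm z_k$ inside the radius $\mathcal B$, so \eqref{eq:region} and the margins yield (b).

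\textbf{Step 1 (bound $\Rightarrow$ agreement).} Suppose $\|\delta\bm z_k\|\le\mathcal B$. The first inequality in \eqref{eq:region} bounds each row $i$ of the first identity in \eqref{eq:sequence_sensitivity}:
\[
|(\bm V_k^0-\bm V_k^\star)_i|\le\|(\bm F_z)_{i,:}\|\mathcal B+[|\bm F_d|\bm{\bar\varepsilon}]_i<\mu_V.
\]
Here $|\delta d_{k,j}|\le\bar\varepsilon_j$ follows from \eqref{eq:eps_h_rho} and the choice of $\bar\varepsilon_j$ for $k\ge k_0$. The margin $\|\bm V_k^\star\|_\infty\le b_0-\mu_V$ then gives $\|\bm V_k^0\|_\infty<b_0$. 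Proposition~\ref{prop:feas} gives $[-b_0,b_0]\subseteq\mathcal I_k$, so every component of $\bm V_k^0$ lies in the hard box.

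The second inequality in \eqref{eq:region}, applied the same way to the second identity in \eqref{eq:sequence_sensitivity}, keeps every predicted row $\bm y_k^0$ strictly inside its unsoftened bounds. So $\bm V_k^0$ with zero slacks is feasible for \eqref{eq:mpc_formulation}. It minimizes the quadratic part of the cost, and any nonzero slack strictly increases the cost. Strict convexity ($\bm M^{\mathrm T}\bm M\succeq r\bm I$) therefore makes it the unique optimizer. This is the observation stated after \eqref{eq:full_sequence_law}.

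\textbf{Step 2 (agreement $\Rightarrow$ recursion).} Whenever the optimizer equals the affine law, the applied first move is $v_k=-\bm K_z\bm z_k-\bm K_d\hat{\bm d}_{h,k}+\bm K_r\bm r_k$. The ideal loop uses the same gains with $\bm z_k^\star$ and $\bm d_{h,k}$. Substituting both into the sampled plant written in the form \eqref{eq:mpc_model} and subtracting gives
\[
\delta\bm z_{k+1}=\bm A_{\mathrm{cl}}\delta\bm z_k-\bm B\bm K_d\delta\bm d_k+\bm E\big(d_{r,k}-\hat d^{\mathrm c}_{r,k-n_d}\big)+\bm E\big(d_{h,k}-\text{(its ideal counterpart)}\big)+\bm\nu_k .
\]
The plant sees the true $d_h$, as does the ideal model, so the periodic mismatch enters only through the optimizer feedforward $\bm K_d\delta\bm d_k$. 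I will check carefully that no extra $\bm E$-periodic term survives. The remaining terms are bounded as follows:
\begin{itemize}
\item $|\bm K_d\delta\bm d_k|\le\sum_j|K_{d,j}|\bar\varepsilon_j=\varepsilon_h$;
\item $|d_{r,k}-\hat d^{\mathrm c}_{r,k-n_d}|\le\varepsilon_r$, from Lemma~\ref{lem:do} (finite-time form with $k_-$), Lemma~\ref{lem:lead}, and non-expansiveness of $\mathrm{sat}_{\bar d}$, since $|d_r|\le\bar d$;
\item $\|\bm\nu_k\|\le\bar\nu$.
\end{itemize}

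\textbf{Step 3 (unrolling).} Unroll the recursion from $k_0$ to $k$. Bound $\|\bm A_{\mathrm{cl}}^{k-k_0}\|\le c\rho_c^{k-k_0}$ for the homogeneous part and use the summed gains $\Gamma_B,\Gamma_E,\Gamma_I$ for the forcing terms. This gives \eqref{eq:iss} at $k+1$, and shows that the right-hand side never exceeds $\mathcal B$ (since $c\rho_c^{l}\le c$). That closes the induction. The base case $k=k_0$ is trivial because $c\ge1$.

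The tube hypothesis is needed to keep Assumptions~\ref{ass:A2}--\ref{ass:A3} and the observer forcing bounds valid along the trajectory. Otherwise, invoking Lemmas~\ref{lem:do} and~\ref{lem:lead} would be circular.

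\textbf{Main obstacle.} The delicate step is Step 2: getting the error decomposition exactly right. That means handling how the delay queue in $\bm z$ stores past nominal commands $v$ rather than applied torques, and where delayed $H$ cancellation and intersample effects end up. Everything not cleanly attributable to $\varepsilon_h$ or $\varepsilon_r$ must be absorbed into $\bm\nu_k$, consistent with the definition of that residual.

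A secondary subtlety is ordering. At step $k$, agreement requires the bound on $\delta\bm z_k$, which comes from agreement at all previous steps. So the induction must be strong over $[k_0,k]$, not merely stepwise.
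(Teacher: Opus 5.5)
Your proposal is correct and follows the paper's proof (Supplement~\ref{S-S08}) essentially step for step. The margin conditions \eqref{eq:region} with \eqref{eq:sequence_sensitivity} place the complete quadratic minimizer inside $[-b_0,b_0]\subseteq\mathcal I_k$ and inside the unsoftened rows, so it is the optimizer with zero slacks. Both loops then follow the affine law, giving $\delta\bm z_{k+1}=\bm A_{\mathrm{cl}}\delta\bm z_k-\bm B\bm K_d\delta\bm d_k+\bm E w_k+\bm\nu_k$ with $|w_k|\le\varepsilon_r$, which you obtain from saturation non-expansiveness and Lemmas~\ref{lem:do}--\ref{lem:lead}. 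Unrolling this recursion keeps the deviation within $\mathcal B$ and closes the induction. Your remarks that the induction must run over all earlier steps, and that the tube hypothesis is what keeps the residual bounds valid without circularity, correspond to the paper's iteration ``on every interval on which the affine law applies'' and its envelope-inclusion remark.
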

\begin{proof}
The Schur property gives finite convolution gains. The periodic prediction bound and finite-time delayed-observer bound yield the forcing budgets $\varepsilon_h$ and $\varepsilon_r$. Starting at $k_0$, \eqref{eq:sequence_sensitivity} and \eqref{eq:region} place the complete quadratic minimizer inside the input box and every unsoftened predicted-state bound. It therefore minimizes the full objective with zero slacks. Subtracting the two affine-loop recursions and applying the Schur convolution bound gives \eqref{eq:iss}. This bound preserves the budget $\mathcal B$; the same row margins and envelope inclusion close the induction. The full delayed-error and convolution argument is given in Supplement~\ref{S-S08}.
\end{proof}

\begin{remark}
The state rows in \eqref{eq:region} and the \emph{whole} $N_c$-dimensional input-sequence margin are essential: a later decision can activate a hard bound and change the first decision, even when that first decision has headroom. Active state penalties can also change the affine law. Outside this region, Proposition~\ref{prop:feas} gives input feasibility on its envelope, but no stability conclusion is inferred for active-constraint motion. The drift weight preserves boundedness and changes the prediction-error budget; its accuracy benefit is empirical. Learning can reduce the aperiodic error budget when its predictions are accurate, while its projected magnitude and rate keep that budget finite. Numerical optimization errors can be included in $\bm\nu_k$; the theorem does not identify a small solver residual with an exact physical model.
\end{remark}

\begin{remark}[Numerical check of the nominal feedback]\label{rem:numeric}
The selected nominal feedbacks satisfy the Schur check: their yaw/pitch spectral radii are $0.6202/0.5979$ for the retained-drift simulation model and $0.7788/0.7807$ for the separately identified flight model. Supplement~\ref{S-S09} gives the weights and impulse gains. These checks address Assumption~\ref{ass:A4}; they do not certify the trajectory-dependent margins of \eqref{eq:region} for the flights.
\end{remark}

\section{Simulation and Experimental Results}\label{sec:results}

\subsection{Simulation Setup}\label{sec:sim_setup}

The scheme is evaluated on the yaw axis of a commercial two-axis gimbal driven through its speed interface. The plant, identified by frequency sweeps, is the input-equivalent model $\dot\omega=\big(-\omega+K(u(t-T_d)+d)\big)/\tau_m$ with $K=0.805$, $\tau_m=0.109$\,s and $T_d=0.059$\,s, where $u$ is the integer speed command in levels, $|u|\le u_{\max}=50$, so that $J=\tau_m/K$, $H=-\omega/K$ and $n_d=3$ at $T_s=0.02$\,s. For these simulations, the first-order drift is retained in the nominal state matrix, and the applied speed command is $u_k=v_k-\hat d^{\mathrm c}_{r,k}$; mechanical and tracking-error soft rows are omitted. This is the box-only realization of Section~\ref{sec:mpc}. The plant is integrated at 1\,kHz and includes the measured non-idealities: a dead zone of 3 levels, gain asymmetry, quantization, and command jitter. Two plant variants are used. \emph{Nominal} is the model above. \emph{Calibrated} adds an extra first-order lag of 22\,ms, angle/rate-readout noise and a rate-readout scale of $1/1.105$, reproducing measured RMSE within 0.02\,deg for most methods; it is the primary test case.

Disturbances with known profiles are injected directly into the command channel. The periodic component is $d_h=a_1\sin(2\pi ft+\phi_1)+a_2\sin(4\pi ft+\phi_2)$ with amplitudes $a_1\in[6.4,9.6]$ and $a_2\in[1.2,2.4]$ levels and frequency $f=f_0\,\mathcal U[0.9,1.1]$. In the \emph{steady} scenario, the frequency remains constant, whereas in the \emph{switch} scenario, it jumps to $1.6f$ at $t=20$\,s. The aperiodic component combines a random walk, a step, and the velocity-dependent friction term. The angular reference is $3\sin(0.8t)$\,deg; each run lasts 30\,s, and the evaluation metrics exclude the first 5\,s transient. Four base frequencies $f_0\in\{0.5,1,1.5,2\}$\,Hz span the band in which periodic vibrations degrade pointing accuracy.

Seven controllers share the MPC weights ($q_\theta=1$, $q_\omega=0$, $r=10^{-4}$, $N_p=20$, $N_c=3$) and the input limit: MPC, MPC without compensation; MPC-DO, MPC with a single linear DO~\cite{chenDisturbanceobserverbasedControlRelated2015}; MPC-APDO, MPC with the original APDO~\cite{yangAdaptivePeriodicdisturbanceObserver2023} and a DO, both compensated at their current value; MPC with fixed frequency, the nominal disturbance-preview structure of~\cite{Xie2026Automatica} with a full-order observer and a fixed internal-model frequency $f_0$; MPC with adaptive frequency, the same nominal structure with the internal-model frequency adapted online; MPC-APDO (preview), which predicts $d_h$ over the horizon as in Section~\ref{sec:apdo} but uses the original APDO frequency law and holds the DO estimate constant over the horizon; MPC-DS, the proposed scheme in Section~\ref{sec:control_design}, configured with $A_{\max}=12$, $m=5$, $\bar d=30$ levels, and the drift weight \eqref{eq:apdo_rho} with $\kappa=4$, $L=50$; MPC-DS ($\kappa=0$) denotes the same scheme without drift weighting. The learned feedforward and lead correction are disabled in the main simulation comparisons so that MPC-DS isolates the model-based prediction and compensation core. Parameter tuning details are in Supplement~\ref{S-S06}. All test evaluations use the disjoint seeds 100--109, providing 40 runs per method and scenario.
\subsection{Tracking under Periodic Disturbances}\label{sec:res_main}

Tracking accuracy, command saturation, and sensitivity to the frequency switch are compared across all controllers in Table~\ref{tab:res_main}. The relative change (Rel.) denotes the paired RMSE ratio relative to MPC-DO on the same seed, averaged across the seeds for each $f_0$ (using the median when a cell contains a high-error run) and then across the four base frequencies $f_0$. For this comparison, runs with RMSE $\ge1$\,deg are counted as high-error runs (column Fail.).

\begin{table*}[!htbp]
	\centering
	\caption{Test results over 40 runs per entry (10 seeds $\times$ 4 base frequencies). RMSE in deg; Rel.: paired change relative to MPC-DO (\%); Sat.: share of samples with $|u|\ge u_{\max}$ (\%); Fail.: count of high-error runs with RMSE $\ge1$\,deg.}
	\label{tab:res_main}
	\small
	\setlength{\tabcolsep}{3pt}
	\begin{tabular*}{\textwidth}{@{\extracolsep{\fill}}lcccccccc@{}}
		\toprule
		& \multicolumn{4}{c}{\textbf{Calibrated, steady}} & \multicolumn{4}{c}{\textbf{Calibrated, switch}} \\
		\cmidrule(lr){2-5}\cmidrule(lr){6-9}
		\textbf{Method} & RMSE & Rel. & Sat. & Fail. & RMSE & Rel. & Sat. & Fail. \\
		\midrule
		MPC    & 0.389 & $+24.0$  & 0.1  & 0  & 0.382 & $+27.9$  & 0.1  & 0 \\
		MPC-DO    & 0.320 & 0        & 0.1  & 0  & 0.303 & 0        & 0.1  & 0 \\
		MPC-APDO   & 0.295 & $-3.4$   & 1.2  & 0  & 0.333 & $+15.4$  & 0.9  & 0 \\
		\makecell[l]{MPC with\\fixed frequency}   & 0.934 & $+201$   & 90.4 & 17 & 0.916 & $+219$   & 90.9 & 11 \\
		\makecell[l]{MPC with\\adaptive frequency} & 0.898 & $+188$   & 88.7 & 7  & 0.874 & $+197$   & 88.3 & 3 \\
		MPC-APDO (preview)    & 0.210 & $-29.9$  & 0.9  & 0  & 0.281 & $-2.8$   & 0.9  & 0 \\
		MPC-DS ($\kappa=0$) & 0.138 & $-55.5$ & 0 & 0 & 0.203 & $-32.2$ & 0 & 0 \\
		\textbf{MPC-DS} & $\bm{0.129}$ & $\bm{-58.1}$ & \textbf{0} & \textbf{0} & $\bm{0.182}$ & $\bm{-39.1}$ & \textbf{0} & \textbf{0} \\
		\midrule
		& \multicolumn{4}{c}{\textbf{Nominal, steady}} & \multicolumn{4}{c}{\textbf{Nominal, switch}} \\
		\cmidrule(lr){2-5}\cmidrule(lr){6-9}
		MPC-DO    & 0.217 & 0        & 0    & 0  & 0.217 & 0        & 0    & 0 \\
		\makecell[l]{MPC with\\fixed frequency}   & 0.100 & $-52.3$  & 20.2 & 0  & 0.142 & $-36.2$  & 19.1 & 0 \\
		\makecell[l]{MPC with\\adaptive frequency} & 0.086 & $-58.2$  & 20.0 & 0  & $\bm{0.100}$ & $\bm{-53.2}$ & 19.0 & 0 \\
		MPC-APDO (preview)    & 0.115 & $-44.2$  & 5.4  & 0  & 0.156 & $-28.5$  & 5.0  & 0 \\
		MPC-DS ($\kappa=0$) & 0.089 & $-54.8$ & 0 & 0 & 0.139 & $-33.2$ & 0 & 0 \\
		\textbf{MPC-DS} & $\bm{0.080}$ & $\bm{-58.7}$ & \textbf{0} & \textbf{0} & 0.122 & $-41.5$ & \textbf{0} & \textbf{0} \\
		\bottomrule
	\end{tabular*}
\end{table*}

On the calibrated plant, MPC-DS reduces the RMSE relative to MPC-DO by 58.1\% at constant frequency and by 39.1\% with the frequency switch, with no saturated sample and no high-error run. Without the drift weight the reductions are 55.5\% and 32.2\%; the weight lowers the mean RMSE by 5.9\% and 10.6\% (better in 35 of 40 runs at constant frequency, Wilcoxon $p=3.5\times10^{-6}$, and in all 40 with the switch, $p=1.8\times10^{-12}$). Current-value compensation (MPC-APDO) cannot account for the lag of the inner loop, and its benefit turns into a loss above 1.5\,Hz. The two frequency-preview variants, which perform well on the nominal plant, saturate in about 90\% of the samples on the calibrated plant; Fig.~\ref{fig:box} shows this is a systematic failure. The 22\,ms unmodeled lag and the readout noise are not part of their observer model, and the observer error, held constant over the horizon, is amplified by the MPC. MPC-APDO (preview) shares this structure and is affected to a lesser degree.
\begin{figure*}[pos=t]
	\centering
	\includegraphics[width=\textwidth]{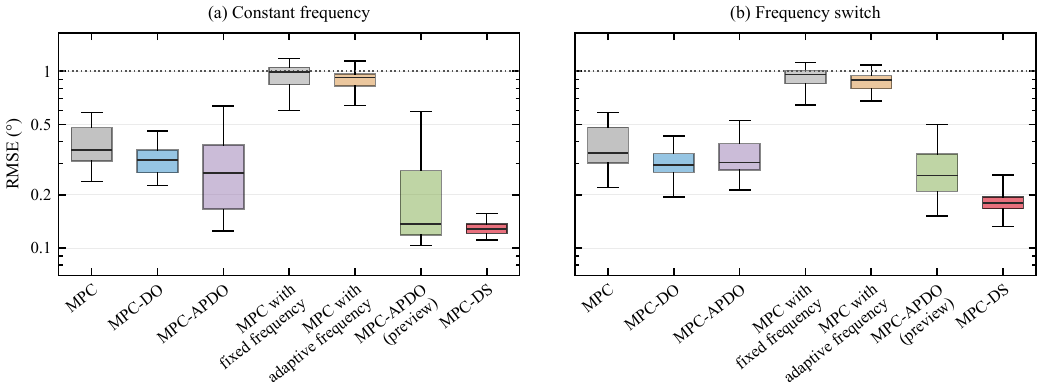}
	\caption{Distribution of the pointing RMSE over 40 test runs on the calibrated plant (boxes: quartiles; whiskers: minimum and maximum; dotted line: 1\,deg high-error threshold).}
	\label{fig:box}
\end{figure*}

On the nominal plant without unmodeled lag, MPC with adaptive frequency achieves the lowest RMSE in the switch scenario ($-53.2\%$ versus $-41.5\%$ for MPC-DS), while MPC with fixed frequency and MPC with adaptive frequency match the accuracy of MPC-DS at constant frequency ($-52.3\%$ and $-58.2\%$ versus $-58.7\%$), though with 19--20\% saturated samples. Across both plant variants, MPC-DS maintains competitive nominal tracking accuracy while avoiding command saturation and preserving accuracy when the plant deviates from the prediction model.

Fig.~\ref{fig:time} shows a representative switch run on the calibrated plant ($f_0=1$\,Hz, seed 100). Before the switch (10--20\,s), the RMSE is 0.159\,deg for MPC-DS against 0.375\,deg for MPC-DO. During the 2\,s after the switch, MPC-DS is worse (0.456 versus 0.319\,deg; 0.570\,deg without the drift weight), because the predictor extrapolates the old frequency until the estimate has moved. From 22 to 30\,s the two are again comparable (0.283 and 0.272\,deg). The command of MPC-DS stays within $\pm(u_{\max}-m)$ throughout, as guaranteed by Proposition~\ref{prop:feas}, whereas MPC with fixed frequency is saturated in 91\% of the samples.
\begin{figure}[pos=t]
	\centering
	\includegraphics[width=\columnwidth]{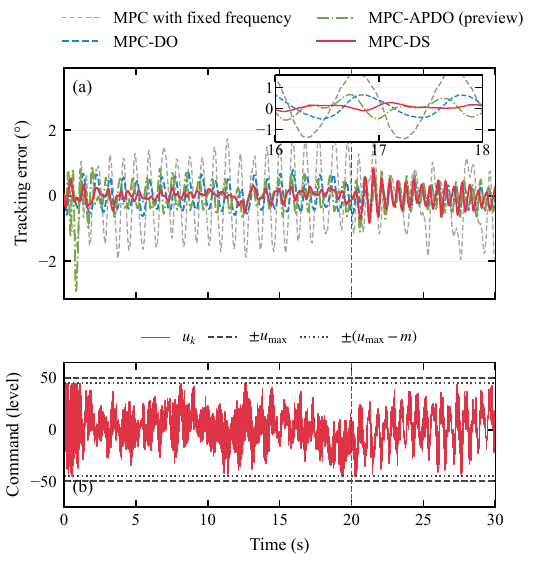}
	\caption{Representative run on the calibrated plant, $f_0=1$\,Hz with the switch to $1.6f$ at $t=20$\,s (dashed vertical line): (a) tracking error, with an inset showing 16--18\,s; (b) integer speed-command level of MPC-DS with the input limit and the tightened limit $u_{\max}-m$.}
	\label{fig:time}
\end{figure}

\subsection{Frequency Estimation and Ablation}\label{sec:res_ablation}

Each component of MPC-DS was removed in turn, with the parameters selected for MPC-DS and no retuning: w/o drift weighting is MPC-DS ($\kappa=0$); w/ gradient frequency estimation uses the gradient frequency law of~\cite{yangAdaptivePeriodicdisturbanceObserver2023} instead of \eqref{eq:apdo_freq_estimator}; w/o periodic prediction sets $\hat d_h\equiv0$; w/o input tightening sets $m=0$; w/ horizon-held aperiodic estimate feeds $\hat d_r$ into the horizon; w/o amplitude projection removes the amplitude projection. The results on the calibrated plant are shown in Fig.~\ref{fig:ablation}.

\begin{figure}[pos=t]
	\centering
	\includegraphics[width=\columnwidth]{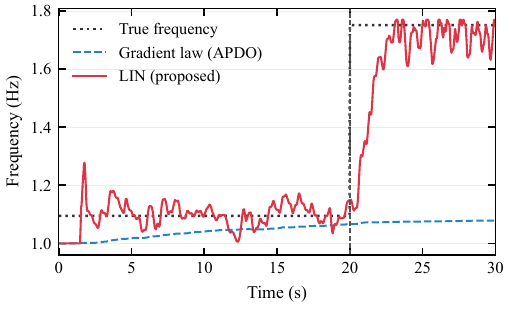}
	\caption{Frequency estimates in the run of Fig.~\ref{fig:time}: proposed linear-regression law \eqref{eq:apdo_freq_estimator} and the gradient law of~\cite{yangAdaptivePeriodicdisturbanceObserver2023}, both with the same prefilter input.}
	\label{fig:freq}
\end{figure}

\begin{figure*}[pos=t]
	\centering
	\includegraphics[width=\textwidth]{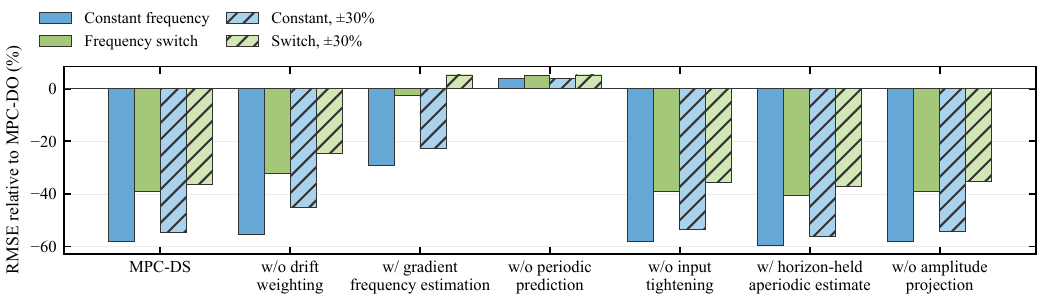}
	\caption{Ablation on the calibrated plant: paired RMSE change relative to MPC-DO, without mismatch and with $\pm30\%$ parameter mismatch (hatched).}
	\label{fig:ablation}
\end{figure*}

Removing periodic prediction reduces MPC-DS to MPC-DO ($+4.0\%$ and $+5.0\%$), confirming that the primary tracking improvement originates from predicting the periodic disturbance over the horizon. Among the predictor modules, the frequency estimation law has the largest impact: substituting the gradient law reduces the relative RMSE improvement from $-58.1\%$ to $-29.3\%$ at constant frequency and from $-39.1\%$ to $-2.6\%$ under the frequency switch. Fig.~\ref{fig:freq} shows that before the switch, both estimators track the true frequency closely (mean errors of 0.029 and 0.041\,Hz). After the switch, the gradient law fails to reach the new frequency within the remaining 10\,s (mean error of 0.674\,Hz over 25--30\,s), whereas the proposed linear-regression estimator enters a 5\% band after 9.8\,s (mean error of 0.040\,Hz over 25--30\,s). The drift weight is the second most influential component (w/o drift weighting: $-55.5\%$ and $-32.2\%$), and its share grows under mismatch (Section~\ref{sec:res_mismatch}). The remaining three components change the relative RMSE by at most 1.5 percentage points on the calibrated plant with and without $\pm30\%$ mismatch; their role shows in the guarantees and on the nominal plant. Without the margin the command saturates in 4--7\% of the samples at $\pm30\%$, against none for MPC-DS. With the aperiodic estimate held over the horizon, the nominal plant is more accurate without mismatch ($-65.0\%$ versus $-58.7\%$), but at $\pm30\%$ the gain falls to $-25.6\%$ and $-19.8\%$ against $-50.7\%$ and $-36.8\%$ for MPC-DS. The amplitude projection has no measurable effect on accuracy in any case tested; it is needed for the uniform bound of Lemma~\ref{lem:pred}.
\subsection{Robustness to Parameter Mismatch}\label{sec:res_mismatch}
Fig.~\ref{fig:mismatch} shows the mean RMSE on the calibrated plant as the mismatch bound increases from 0 to 30\%, with the parameters tuned for the identified plant. MPC-DS increases from 0.129 to 0.148\,deg at constant frequency and from 0.182 to 0.204\,deg with the switch ($-54.7\%$ and $-36.4\%$ relative to MPC-DO at $\pm30\%$), without any saturated sample, and remains below MPC-DO at every level. The drift weight is most useful here: without it the RMSE rises to 0.184 and 0.245\,deg at $\pm30\%$ (19.9\% and 16.9\% lower with the weight, better in 31 and 36 of 40 runs, $p\le10^{-5}$). With mismatch the frequency estimate fluctuates more, and the weight discounts the far horizon in these phases. MPC-APDO (preview) degrades from 0.210 to 0.312\,deg, saturates in 11--12\% of the samples and exceeds the high-error threshold in 3 runs at 30\%. On the nominal plant the picture is the same: at $\pm30\%$, MPC-DS keeps $-50.7\%$ and $-36.8\%$ relative to MPC-DO, whereas MPC with adaptive frequency and MPC-APDO (preview) fall behind MPC-DO and saturate in 38--58\% of the samples.

\begin{figure*}[pos=t]
	\centering
	\includegraphics[width=\textwidth]{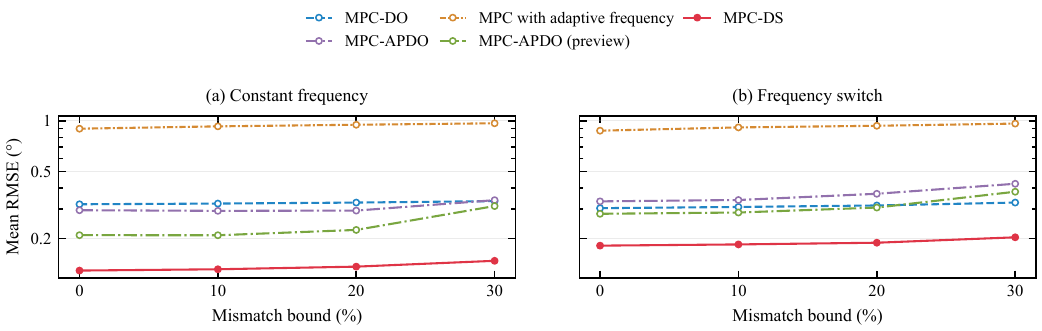}
	\caption{Mean RMSE on the calibrated plant versus the bound of the uniform mismatch in $K$, $\tau_m$ and $T_d$ (40 runs per point, parameters not retuned).}
	\label{fig:mismatch}
\end{figure*}

\begin{figure*}[pos=t]
    \centering
    \includegraphics[width=\textwidth]{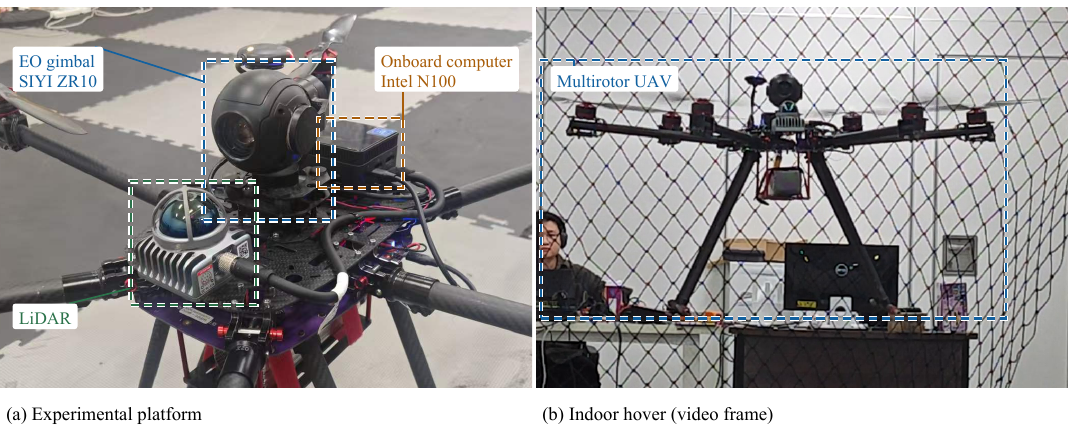}
    \caption{Experimental platform and indoor flight. (a) Hardware close-up; dashed boxes identify the SIYI ZR10 electro-optical gimbal, Intel N100 onboard computer, and LiDAR. (b) Video frame of the multirotor hovering inside a netted indoor test area.}
    \label{fig:platform}
\end{figure*}

\subsection{Learned Components}\label{sec:res_learning}
The learned current-value feedforward $\hat d_L$ and the learned lead residual $\hat\epsilon$ were added to MPC-DS one at a time, each network trained on closed-loop data of seeds 0--9 and frozen, and $\omega_o$ and the projection bounds were retuned for each variant. The learned feedforward gave no significant gain: $-0.9\%$ on the nominal plant and $-1.1\%$ on the calibrated plant relative to MPC-DS without learning (22 and 24 of 40 runs better), and $+3.9\%$ on the calibrated plant when trained on hardware logs. The lead correction \eqref{eq:lead} with the learned residual was effective: the RMSE decreased by 4.4\% on the nominal plant (33 of 40 runs better, Wilcoxon $p=2\times10^{-6}$) and by 2.2\% on the calibrated plant (28 of 40, $p=2\times10^{-3}$), the gain being largest at $f_0=2$\,Hz. The linear part alone changed the RMSE by $+1.0\%$ (nominal) and $-1.8\%$ (calibrated), and with the residual network switched off the nominal RMSE increased by 15.4\%, so the gain comes from the learned residual. These comparisons locate the useful learned correction in the delay-induced change $D^\star_k$: the current-value estimate is already supplied effectively by the linear DO at the selected bandwidth. The projection bounds in Remark~\ref{rem:learning} and Lemma~\ref{lem:lead} bound the learned signals; the observed RMSE reductions are performance evidence for the tested cases. The gains are smaller than with $\kappa=0$, consistent with overlapping performance effects of drift weighting and lead correction.
\subsection{Flight Experiment}\label{sec:flight}
To evaluate two-axis angular tracking in flight, indoor hovering experiments were conducted on a multirotor UAV equipped with a SIYI ZR10 electro-optical gimbal, as shown in Fig.~\ref{fig:platform}. During the tests, the UAV hovered indoors approximately 1\,m above the floor while an onboard Intel N100 computer executed the control laws at a nominal sampling period of 20\,ms. The gimbal was driven through its speed-command interface using the input-equivalent formulation in Section~\ref{sec:sim_setup}, with independently identified yaw and pitch models. Both axes tracked periodic angular references with an amplitude of $8^\circ$ a frequency of 0.6\,Hz, and a $90^\circ$ phase offset over 30\,s trials. The demonstration video, reproducibility code, processed figure data and plotting scripts are available at \url{https://github.com/Awilonk/airborne-gimbal-mpc-ds}.

\begin{figure*}[pos=t]
    \centering
    \includegraphics[width=\textwidth]{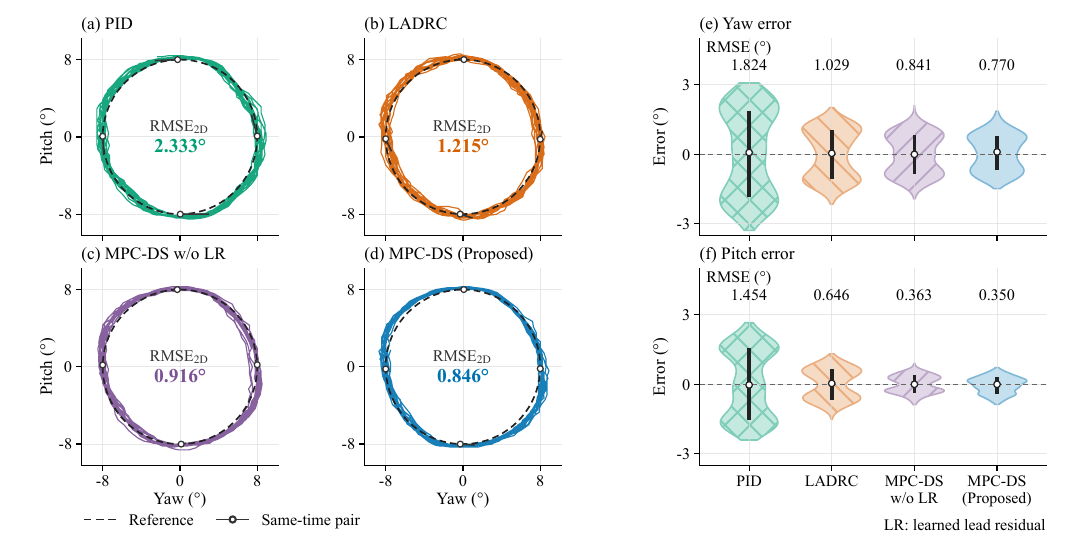}
    \caption{Two-axis angular tracking during indoor hovering. (a)--(d) Measured angular trajectories for PID, LADRC, MPC-DS w/o LR, and MPC-DS (Proposed) over 5--28\,s; the dashed circle is the reference, and black segments with open circles mark four simultaneous reference/measurement pairs in the same reference cycle. The circle-center values give the joint two-axis RMSE, defined from simultaneous yaw and pitch errors. (e), (f) Signed yaw and pitch error densities over the same interval, with colors matching (a)--(d); white dots and dark segments denote medians and interquartile ranges, and the numbers give RMSEs in degrees. Each method contributes one 30\,s trial; the densities describe time samples within that trial. PID is the plain-PID trial collected in a separate flight batch.}
    \label{fig:hover}
\end{figure*}

The flight adapter uses $u_k=v_k-H_k-\hat d^{\mathrm c}_{r,k}$ with $H_k=-\omega_k/K$. The yaw parameters are $K=1.2491$, $\tau_m=0.1957$\,s and $T_d=0.060$\,s; the pitch parameters are $K=1.4387$, $\tau_m=0.2433$\,s and $T_d=0.040$\,s. It uses $N_p=20$, $N_c=3$, $q_\theta=1$, $q_\omega=0$, $r=10^{-3}$ and $p=1$, with $u_{\max}=100$, $m=5$, $\bar d=30$ and $U_{\mathrm a}=90$ levels. The mechanical and speed soft bounds are $\pm10^\circ$ relative to the trial origin and $\pm60^\circ/\mathrm s$; the yaw/pitch tracking-error soft bounds are $\pm4^\circ/\pm3^\circ$, used as FOV proxies without an optical FOV calibration. The solver first checks the box solution; if soft rows are violated, it solves the explicit slack QP, requiring a certified objective gap at most $10^{-5}$. An empty input interval, failed solve or computation timeout stops the trial. The physical rate lag, the delayed cancellation residual $H_k-H_{k-n_d}$, rounding and timing variations are implementation effects represented by $\bm\nu_k$ in the analysis.

The proposed controller was compared with a PID controller with anti-windup and without model feedforward, a bandwidth-parameterized LADRC~\cite{gaoScalingBandwidthParameterization2003}, and MPC-DS w/o LR, where LR denotes the learned lead residual. In MPC-DS w/o LR, only $\hat\epsilon_k$ in \eqref{eq:lead} is disabled; the linear lead term and other parameters are retained. One trial per method is reported. The reported errors use the gimbal's relative angular feedback; no independent inertial line-of-sight calibration or artificial disturbance injection was used in these trials. The PID trial was collected in a separate flight batch. All RMSEs were computed from the original, unfiltered samples over the common 5--28\,s interval, excluding the initial transient and ending reference ramp. Figs.~\ref{fig:hover}(a)--(d) compare the two-axis angular trajectories, and Figs.~\ref{fig:hover}(e) and~\ref{fig:hover}(f) show the yaw and pitch error distributions with their RMSEs.

The joint two-axis metric is defined from simultaneous angular error pairs as
\begin{equation*}
\mathrm{RMSE}_{\mathrm{2D}}=\sqrt{\frac{1}{N}\sum_{k=1}^{N}(e_{\mathrm{yaw},k}^{2}+e_{\mathrm{pitch},k}^{2})}.
\end{equation*}
where the paired errors are measured in degrees. Across the evaluation window, PID, LADRC, MPC-DS w/o LR, and the proposed MPC-DS achieve joint two-axis RMSEs of $2.333^\circ$, $1.215^\circ$, $0.916^\circ$, and $0.846^\circ$, respectively. On the individual axes, the proposed controller attains yaw and pitch RMSEs of $0.770^\circ$ and $0.350^\circ$, corresponding to reductions of 25.1\% and 45.9\% relative to LADRC, and 8.4\% and 3.8\% relative to MPC-DS w/o LR.

Deterministic replay of the Proposed trial reproduced the commands to within $2\times10^{-11}$ levels. At all 1493 recorded control samples, both axis QPs succeeded with zero predicted-state slacks; the full yaw input sequence reached a hard bound at two samples, whereas the pitch sequence did not. These observations verify the recorded optimization behavior and do not certify that the entire trial satisfies the inactive-region conditions of Theorem~\ref{thm:main}. The hover tests demonstrate two-axis angular tracking on the UAV-mounted gimbal. The recorded comparison with MPC-DS w/o LR shows lower errors with the learned lead residual, particularly on the yaw axis. The controlled simulations evaluate the disturbance-separation and frequency-adaptation mechanisms.

\section{Conclusion}\label{sec:conclusion}

This paper presented a disturbance-separated model predictive control (MPC-DS) scheme for airborne electro-optical gimbal pointing under composite periodic and aperiodic disturbances and multiple constraints. An adaptive periodic disturbance observer with a projected linear-regression frequency estimator and drift weighting predicts harmonic vibrations over the MPC horizon, while a bounded LSTM-augmented observer compensates aperiodic disturbances and input-delay increments outside the optimizer. The input-set construction ensures feasible bounded commands on the stated headroom envelope. Under full-sequence input margins, predicted-state margins and bounded model residuals, the closed loop satisfies a regional practical tracking bound relative to its ideal-information nominal loop. On a gimbal model calibrated against hardware logs, the model-based core of MPC-DS reduced the pointing RMSE by 58\% at constant vibration frequency and by 39\% across a frequency switch relative to MPC with a linear disturbance observer, with zero command saturation and graceful retention under 30\% parameter mismatch. Ablation experiments showed that learning is effective specifically when predicting the delay-induced change of the aperiodic disturbance, yielding an additional 2--4\% RMSE reduction, whereas learning a current-value feedforward term provided no significant gain over the linear observer. In indoor hovering tests with a UAV-mounted two-axis gimbal, the proposed controller achieved yaw and pitch relative-angle tracking RMSEs of $0.770^\circ$ and $0.350^\circ$, reducing the RMSE by 25\% and 46\% relative to bandwidth-parameterized LADRC in single-trial evaluations. Future work will examine repeated flight trials and additional carrier motions.

\bibliographystyle{elsarticle-num}
\bibliography{references}

\input{supplement}
\end{document}

%% file: supplement.tex
\twocolumn[\centering{\Large\bfseries Supplementary material\par}\vspace{1em}]
\renewcommand{\thesection}{S\arabic{section}}
\renewcommand{\theequation}{S\arabic{equation}}
\renewcommand{\thefigure}{S\arabic{figure}}
\renewcommand{\thetable}{S\arabic{table}}
\renewcommand{\thelemma}{S\arabic{lemma}}
\renewcommand{\theremark}{S\arabic{remark}}
\renewcommand{\theHsection}{supp.\arabic{section}}
\setcounter{section}{0}
\setcounter{equation}{0}
\setcounter{figure}{0}
\setcounter{table}{0}
\setcounter{lemma}{0}
\setcounter{remark}{0}

\noindent This supplementary material provides detailed derivations, complete proofs and implementation checks supporting the main article. Equation numbers beginning with S refer to this supplementary material; other equation numbers refer to the main article.

\section{Detailed Rigid-Body Derivation}\label{S-S01}
This section presents the complete Newton--Euler derivation of the two-axis gimbal dynamics. The platform angular velocities satisfy the rigid-body kinematic relations:
The ISP is equipped with a gyroscope, motor encoders, and an electro-optical camera. The mechanism consists of an outer azimuth gimbal (body A) rotating about the $z$-axis by $\theta_a$ and an inner elevation gimbal (body E) rotating about the $y$-axis by $\theta_e$, with the camera LOS aligned with the elevation platform. Choosing $\theta_a$ and $\theta_e$ as the joint angles of the azimuth and elevation axes relative to the gimbal base and the azimuth frame, respectively, the platform angular velocities satisfy the rigid-body kinematic relations:
\begin{equation}\label{S-eq:kinematics}
	\left\{
	\begin{aligned}
		\bm{\omega}_a &= \bm{R}_B^a\bm{\omega}_B + [0\;0\;\dot{\theta}_a]^\mathrm{T},\\
		\bm{\omega}_e &= \bm{R}_a^e\bm{\omega}_a + [0\;\dot{\theta}_e\;0]^\mathrm{T},
	\end{aligned}
	\right.
\end{equation}
where $\dot{\theta}_a$ and $\dot{\theta}_e$ are the corresponding joint angular velocities, $\bm{\omega}_a$ and $\bm{\omega}_e$ represent the angular velocities of the azimuth and elevation platforms, and $\bm{\omega}_B=[p\;q\;r]^\mathrm{T}$ is the aircraft angular velocity, with $p, q$, and $r$ denoting the roll, pitch, and yaw rates, respectively. Here, $\bm{R}_B^a$ and $\bm{R}_a^e$ denote the rotation matrices. Since the camera is fixed to the elevation platform, $\bm{\omega}_e$ inherently describes the motion of the LOS in inertial space.

By the Newton--Euler formulation, the rotational dynamics of the two-axis ISP are given by
\begin{equation}\label{S-eq:newton_euler}
	\left\{
	\begin{aligned}
		\bm{I}_a\dot{\bm{\omega}}_a+\bm{\omega}_a\times(\bm{I}_a\bm{\omega}_a)&=\bm{T}_{a,total},\\
		\bm{I}_e\dot{\bm{\omega}}_e+\bm{\omega}_e\times(\bm{I}_e\bm{\omega}_e)&=\bm{T}_{e,total},
	\end{aligned}
	\right.
\end{equation}
where $\dot{\bm{\omega}}_a$ and $\dot{\bm{\omega}}_e$ denote the corresponding angular accelerations, $\bm{I}_a$ and $\bm{I}_e$ are the inertia tensors, and $\bm{T}_{a,total}$ and $\bm{T}_{e,total}$ denote the total torques acting on \textit{body A} and \textit{body E}, respectively.

Substituting \eqref{S-eq:kinematics} into \eqref{S-eq:newton_euler} and projecting the resulting equations onto the azimuth and elevation joint axes yields the manipulator-form dynamics:
\begin{equation}
	\label{S-eq:manipulator_form}
	\begin{bmatrix}
		M_{11} & M_{12} \\
		M_{21} & M_{22}
	\end{bmatrix}
	\begin{bmatrix}
		\ddot{\theta}_a \\
		\ddot{\theta}_e
	\end{bmatrix}
	+
	\begin{bmatrix}
		C_{11} & C_{12} \\
		C_{21} & C_{22}
	\end{bmatrix}
	\begin{bmatrix}
		\dot{\theta}_a \\
		\dot{\theta}_e
	\end{bmatrix}
	=
	\begin{bmatrix}
		\tau_a \\
		\tau_e
	\end{bmatrix}
	+
	\begin{bmatrix}
		d_a \\
		d_e
	\end{bmatrix},
\end{equation}
where $\ddot{\theta}_a$ and $\ddot{\theta}_e$ denote the joint angular accelerations, and $[\tau_a,\tau_e]^\mathrm{T}$ and $[d_a,d_e]^\mathrm{T}$ denote the control torque vector and disturbance torque vector, respectively. The matrix entries are defined as
\begin{equation}
	\label{S-eq:matrix_definitions}
	\begin{aligned}
		&\left\{\begin{aligned}
			M_{11} &= I_{za} + (I_{xe} - I_{ze} - m_e l_e^2)\sin^{2}\theta_e \\
			&\quad + (I_{ze} + m_e l_e^2), \\
			M_{12} &= M_{21} = 0, \\
			M_{22} &= I_{ye} + m_e l_e^{2},
		\end{aligned}\right. \\[10pt]
		&\left\{\begin{aligned}
			C_{11} &= (I_{xe} - I_{ze} - m_e l_e^2)\sin\theta_e\cos\theta_e\,\dot{\theta}_e, \\
			C_{12} &= (I_{xe} - I_{ze} - m_e l_e^2)\sin\theta_e\cos\theta_e\,\dot{\theta}_a, \\
			C_{21} &= -(I_{xe} - I_{ze} - m_e l_e^2)\sin\theta_e\cos\theta_e\,\dot{\theta}_a, \\
			C_{22} &= 0,
		\end{aligned}\right.
	\end{aligned}
\end{equation}
where $I_{za}$ is the moment of inertia of the azimuth platform about the $z$ axis, $I_{xe}$, $I_{ye}$, and $I_{ze}$ are the principal moments of inertia of the elevation platform about the $x$, $y$, and $z$ axes, respectively, $m_e$ denotes the mass of the elevation platform, and $l_e$ represents the distance from the elevation axis to its center of mass.
These expressions yield the manipulator-form dynamics \eqref{S-eq:manipulator_form} and state-space representation \eqref{eq:state_space_physical} used in the main paper.

\section{Detailed Disturbance Component Derivations}\label{S-S02}
This section provides the complete expressions for each disturbance component introduced in Section~2.2 of the main paper.
\textit{Gravitational Torque $f_{g,i}$}: Induced by the mass imbalance $m_e l_e$, the gravitational torques act on both axes and depend on the aircraft attitude as
\begin{equation}\label{S-eq:gravity_torque}
	\left\{
	\begin{aligned}
		f_{g,a} &= m_e g l_e \cos\theta_e (\sin\alpha \cos\theta_a + \sin\beta \sin\theta_a), \\
		f_{g,e} &= m_e g l_e \cos\theta_e \cos\beta \cos\alpha,
	\end{aligned}
	\right.
\end{equation}
where $g$ denotes the gravitational acceleration, and $\alpha$ and $\beta$ are the aircraft roll and pitch angles.

\textit{Gyroscopic Torque $f_{gyro,i}$}: From \eqref{S-eq:newton_euler}, the gyroscopic torques arise from projecting $\bm{\omega}_B$ onto the two gimbal axes and take the form
\begin{equation}\label{S-eq:gyroscopic_disturbance}
	\left\{
	\begin{aligned}
		f_{gyro,a} &= -(I_{xe} - I_{ye} + I_{ze}) (p\cos\theta_a + q\sin\theta_a) \\
		&\quad \times (-p\sin\theta_a + q\cos\theta_a), \\
		f_{gyro,e} &= -(I_{xe}-I_{ze}) (p\cos\theta_a + q\sin\theta_a) \\
		&\quad \times (-p\sin\theta_a + q\cos\theta_a)\sin\theta_e.
	\end{aligned}
	\right.
\end{equation}

In practice, although aircraft attitude and angular rates are available via the airborne communication bus~\cite{MIL-STD-1553B}, these measurements suffer from transmission latency and a low update rate (typically $50$ Hz), which is insufficient for high-rate ISP pointing control~\cite{Masten2008}. Consequently, the aircraft attitude and angular rates---and hence $f_{g,i}$ and $f_{gyro,i}$---are treated as unknown disturbances to the ISP.

\textit{Vibration Torque $f_{v,i}$}: A multi-harmonic model is adopted to describe the aircraft vibration disturbances~\cite{wangDisturbanceObservationSuppression2024}:
\begin{equation}\label{S-eq:harmonic_disturbance}
	\left\{
	\begin{aligned}
		f_{v,a} &= \gamma_{a,0} + \sum_{j=1}^{N_a} \gamma_{a,j} \sin(\Omega_{a,j} t + \varphi_{a,j}), \\
		f_{v,e} &= \gamma_{e,0} + \sum_{j=1}^{N_e} \gamma_{e,j} \sin(\Omega_{e,j} t + \varphi_{e,j}),
	\end{aligned}
	\right.
\end{equation}
where $\gamma_{i,0}$ denotes the constant vibration bias, $N_i$ denotes the number of dominant modes, $\gamma_{i,j}$ is the vibration amplitude, $\Omega_{i,j}$ is the harmonic frequency, and $\varphi_{i,j}$ is the phase offset. These modal parameters depend on the aircraft flight state. The APDO below retains the modes described by one common fundamental and its integer harmonics. Constant bias and modes outside this retained family are assigned to the residual component; a general set of unrelated modal frequencies is not identified by a single fundamental-frequency estimator.

\textit{Friction Torque $f_{d,i}$}: The LuGre model is adopted to describe the friction torque $f_{d,i}$~\cite{canudasdewitNewModelControl1995a}:
\begin{equation}\label{S-eq:lugre_model}
	\left\{
	\begin{aligned}
		f_{d,i} &= \sigma_{0,i} z_{d,i} + \sigma_{1,i} \dot{z}_{d,i} + \sigma_{2,i} \dot{\theta}_i, \\
		\dot{z}_{d,i} &= \dot{\theta}_i - \frac{|\dot{\theta}_i|}{g_i(\dot{\theta}_i)} z_{d,i},
	\end{aligned}
	\right.
\end{equation}
where $z_{d,i}$ is the internal bristle state, and $g_i(\dot{\theta}_i)$ describes the Stribeck effect governing velocity-dependent friction transitions. The parameters $\sigma_{0,i}$, $\sigma_{1,i}$, and $\sigma_{2,i}$ denote the bristle stiffness, micro-viscous damping, and viscous friction coefficient, respectively. Because these parameters are difficult to identify accurately on the ground and vary with ambient temperature and humidity, they are treated as unknown in the controller design.

\textit{Unmodeled Dynamics and Uncertainty $\delta_i$}: The residual term $\delta_i$ captures unmodeled dynamics and uncertainties such as cable restraint torques, higher-order coupling effects, and sensor biases.
These detailed expressions motivate the disturbance decomposition \eqref{eq:dist_group} into periodic and aperiodic components.

\section{Standard LSTM Gating Mechanisms}\label{S-S03}
The LSTM network used for aperiodic disturbance learning employs standard gating mechanisms. In $\Theta$, $\mathbf{W}$ and $\mathbf{R}$ denote the input and recurrent weight matrices, and $\mathbf{b}$ denotes the bias vectors, with subscripts $\mathrm{f}$, $\mathrm{in}$, $\mathrm{o}$, and $\mathrm{c}$ indicating the forget gate, input gate, output gate, and cell state, respectively. In the objective function, $M$ denotes the total number of training samples, and $\bm{d}_{r,k}^+$ denotes the reconstructed training target. The cell state $\mathbf{c}_k$ represents a slowly varying long-term memory state controlled by the forget gate $\mathbf{f}_k$ and the input gate $\mathbf{in}_k$, while the hidden state $\mathbf{h}_k$ captures a rapidly varying state that is updated step by step via recurrence relations during each training. $\varsigma(\cdot)$ denotes the element-wise sigmoid activation function and $\odot$ denotes the Hadamard product. The estimated disturbance $\bm{\hat{d}}_{\mathrm{LSTM},k}$ is obtained by a linear projection of the hidden state $\mathbf{h}_k$ without an additional activation function, because disturbance estimation is a continuous regression task. The output weight matrix is spectrally normalized as $\bar{\mathbf{W}}_{\mathrm{out}}=\mathbf{W}_{\mathrm{out}}/\|\mathbf{W}_{\mathrm{out}}\|_2$, where $\|\cdot\|_2$ denotes the spectral norm. This scaling is required for the stability analysis in Section~3.4 of the main paper. The LSTM-based disturbance learning optimization is:
In the objective function, $M$ denotes the total number of training samples, and $\bm{d}_{r,k}^+$ denotes the reconstructed training target for $\bm{d}_r$ in the $k$-th sample data, presented as $\bm{d}_{r,k}^+ = [d_{r,a,k},d_{r,e,k}]^\mathrm{T}$. It is obtained from the posterior $d^+_i$ and the APDO estimates during offline construction. The constraints in the following optimization describe the gating mechanisms and information flow of the LSTM network. The cell state $\mathbf{c}_k$ represents a slowly varying long-term memory state controlled by the forget gate $\mathbf{f}_k$ and the input gate $\mathbf{in}_k$, while the hidden state $\mathbf{h}_k$ captures a rapidly varying state that is updated step by step via recurrence relations during each training. $\varsigma(\cdot)$ denotes the element-wise sigmoid activation function and $\odot$ denotes the Hadamard product. Different from standard LSTM networks, we apply spectral normalization to the output layer weight matrix. The estimated disturbance $\bm{\hat{d}}_{\mathrm{LSTM},k}$ is obtained by a linear projection of the hidden state $\mathbf{h}_k$ without an additional activation function, because disturbance estimation is a continuous regression task. The output weight matrix is spectrally normalized as $\bar{\mathbf{W}}_{\mathrm{out}} = \mathbf{W}_{\mathrm{out}} / \| \mathbf{W}_{\mathrm{out}} \|_2$, where $\| \cdot \|_2$ denotes the spectral norm. This scaling is required for the stability analysis in Section~\ref{sec:stability}. The LSTM-based disturbance learning issue can be described in the optimization form:
\begingroup
\small
\begin{equation}\label{S-eq:lstm_optimization_explicit}
	\begin{aligned}
		&\Theta = \{ \mathbf{W}_{\mathrm{f}}, \mathbf{b}_{\mathrm{f}}, \mathbf{R}_{\mathrm{f}},
		\mathbf{W}_{\mathrm{in}}, \mathbf{b}_{\mathrm{in}}, \mathbf{R}_{\mathrm{in}}, \\
		& \quad\; \mathbf{W}_{\mathrm{c}}, \mathbf{b}_{\mathrm{c}}, \mathbf{R}_{\mathrm{c}},
		\mathbf{W}_{\mathrm{o}}, \mathbf{b}_{\mathrm{o}}, \mathbf{R}_{\mathrm{o}}, \\
		& \quad\; \bar{\mathbf{W}}_{\mathrm{out}}, \mathbf{b}_{\mathrm{out}} \}\\
		\min_{\Theta} \quad
		& J_{\mathrm{LSTM}}(\Theta) =
		\sum_{k=1}^{M} \left\| \bm{\hat{d}}_{\mathrm{LSTM},k}
		- \bm{d}_{r,k}^+ \right\|^2 \\[6pt]
		\text{s.t.} \quad
		& \mathbf{f}_k =
		\varsigma \bigl( \mathbf{W}_{\mathrm{f}} \bm{\psi}_k
		+ \mathbf{R}_{\mathrm{f}} \mathbf{h}_{k-1}
		+ \mathbf{b}_{\mathrm{f}} \bigr),\\
		& \mathbf{h}_k = \mathbf{o}_k \odot \tanh(\mathbf{c}_k), \\
		& \mathbf{in}_k =
		\varsigma \bigl( \mathbf{W}_{\mathrm{in}} \bm{\psi}_k
		+ \mathbf{R}_{\mathrm{in}} \mathbf{h}_{k-1}
		+ \mathbf{b}_{\mathrm{in}} \bigr), \\
		& \mathbf{c}_k
		= \mathbf{f}_k \odot \mathbf{c}_{k-1}
		+ \mathbf{in}_k \odot
		\tanh\!\bigl(
		\mathbf{W}_{\mathrm{c}} \bm{\psi}_k
		+ \mathbf{R}_{\mathrm{c}} \mathbf{h}_{k-1}
		+ \mathbf{b}_{\mathrm{c}} \bigr),\\
		& \mathbf{o}_k =
		\varsigma \bigl( \mathbf{W}_{\mathrm{o}} \bm{\psi}_k
		+ \mathbf{R}_{\mathrm{o}} \mathbf{h}_{k-1}
		+ \mathbf{b}_{\mathrm{o}} \bigr), \\
		& \bm{\hat{d}}_{\mathrm{LSTM},k}
		= \bar{\mathbf{W}}_{\mathrm{out}} \mathbf{h}_{k}
		+ \mathbf{b}_{\mathrm{out}}.
	\end{aligned}
\end{equation}
\endgroup
During online execution, the raw network output $\bm{\hat{d}}_{\mathrm{LSTM},k}$ is projected via \eqref{eq:lstm_proj} to bound its magnitude and rate.

\section{Detailed Discrete LSTM-DO Error Analysis}\label{S-S04}
This section gives the complete proof of Lemma~\ref{lem:do} in the main article.
\begin{proof}
Over a single sub-step, subtracting \eqref{eq:lstm_do_dynamics} from the exact plant increment and substituting $d-\hat d_{h,k}-\hat d_{r}=\tilde d_r+(d_h-\hat d_{h,k})$ yields $\tilde\omega^{+}=\tilde\omega+h(-l_0\tilde\omega+J^{-1}\tilde d_r)+hJ^{-1}(d_h-\hat d_{h,k})+r_\omega$, where $r_\omega=O(h^2)$ collects the Euler discretization remainder and the zero-order hold error of $\omega_k$, both of which are bounded via $\sup|\dot\omega|$ and $\sup|\ddot\omega|$ on the operating envelope. For the disturbance-error component, $\tilde d_r^{+}=e_L^{+}-z^{+}=(1-hl_2)\tilde d_r-hL_2\tilde\omega+\big[e_L^{+}-(1-hl_2)e_L\big]$, where the bracketed term accounts for the sampled increment and leakage of $e_L$; its leakage contribution is present at every sub-step when $l_2\ne0$. More explicitly, let $\bm S=\bm I+h\bm A_c$ and let $\bm r_{k,s}$ collect the forcing and remainder of sub-step $s$. Then $\bm\eta_k=\sum_{s=0}^{n_s-1}\bm S^{n_s-1-s}\bm r_{k,s}$ and $|\bm\eta_k|\le\sum_s|\bm S^{n_s-1-s}|\bar{\bm r}_{k,s}$ componentwise, whenever $|\bm r_{k,s}|\le\bar{\bm r}_{k,s}$. Expanding this finite sum gives \eqref{eq:eta_bound} and supplies computable constants once the sub-step envelopes are specified. Because $\bm A_d$ is Schur, $\bm\zeta_k=\bm A_d^k\bm\zeta_0+\sum_{l<k}\bm A_d^{k-1-l}\bm\eta_l$, and taking the second component via the triangle inequality gives \eqref{eq:lstm_do_bound}. For finite times, the bound includes $|[\bm A_d^{k-k_0}\bm\zeta_{k_0}]_2|$ in addition to $\varepsilon_d$; the $\limsup$ statement alone does not give an exact finite-time threshold. Element-wise gains are used rather than a single induced matrix norm because $\tilde\omega$ and $\tilde d_r$ have different physical units and a combined norm bound would be dominated by the inertia scaling $J^{-1}$.
\end{proof}

\section{Detailed Proofs of Lemmas~1 and~2}\label{S-S05}
The complete proofs of Lemmas~\ref{lem:freq} and~\ref{S-lem:pred} are reproduced below; the second statement is included to make the derivation self-contained.
\begin{proof}
(i) Substituting $y_k$ into \eqref{eq:apdo_freq_estimator}, the pre-projection error obeys $\tilde a^-_{k}=(1-g_k\phi_k)\big(\tilde a_{k-1}+a^\star_k-a^\star_{k-1}\big)-g_ke_k$. Because $P_{k-1}\in[p_{\min},p_{\max}]$, $g_k\phi_k=P_{k-1}\phi_k^2/(\lambda+P_{k-1}\phi_k^2)\in[0,1)$, and $|g_k|\le\max_{x}P_{k-1}x/(\lambda+P_{k-1}x^2)=\tfrac12\sqrt{P_{k-1}/\lambda}\le\bar g$. Since $a^\star_k\in\mathcal A$ and projection onto a closed interval is non-expansive, $|\tilde a_k|\le|\tilde a^-_k|$. Iterating over one window, all products of the factors $(1-g_l\phi_l)$ lie in $[0,1]$, and by $\ln(1-x)\le-x$ and $g_l\phi_l\ge p_{\min}\phi_l^2/(\lambda+p_{\max}\phi_{\max}^2)$ the full product is at most $\rho_a$. This gives the window inequality; iterating it yields the ultimate bound.
(ii) On $\mathcal A$, $\big|\tfrac{d}{da}\arccos(a/2)/(T_sD)\big|=\big(2T_sD\sqrt{1-a^2/4}\big)^{-1}\le L_f$.
(iii) Property (iii) follows from the projection and the bijection.
\end{proof}

For a slowly drifting frequency, $\bar\delta_a\le2T_sD\cdot T_s\delta_\Omega$ with $\delta_\Omega=\sup|\dot\Omega|$. Property (iii) holds even when the periodic disturbance vanishes. In that case, normalization amplifies noise, but the interval projection keeps $\hat\Omega_k$ within the admissible band and the amplitude projection below keeps the harmonic prediction bounded.

\textit{Harmonic prediction.} Let $\hat\psi_k=\hat\psi_{k-1}+\hat\Omega_kT_s$ and $\bm\varphi(\psi)=[\cos\psi,\sin\psi,\ldots,\cos n_h\psi,\sin n_h\psi]^{\mathrm T}$. Write the retained disturbance as $d_h(t)=\bm\varphi(\psi(t))^{\mathrm T}\bm b(t)$, where $\dot\psi=\Omega(t)$ and $\|\bm b(t)\|\le A_{\max}$. Rotating the coefficient vector to the phase reference $\hat\psi_k$ gives
$d_h(t_{k+j})=\bm\varphi(\hat\psi_k+\psi^\star_{k,j})^{\mathrm T}\bm\vartheta^\star_{k,j}$, with $\psi^\star_{k,j}=\int_{t_k}^{t_{k+j}}\Omega(t)\,dt$ and $\|\bm\vartheta^\star_{k,j}\|\le A_{\max}$. On a smooth segment, assume $\|\bm\vartheta^\star_{k,j}-\bm\vartheta^\star_{k,0}\|\le jT_s\delta_b$; constant amplitudes give $\delta_b=0$. The harmonic coefficients are fitted to $d^+_k$ by RLS, with the regressors passed through the same posterior low-pass filter. An additional constant regressor absorbs slow components but is excluded from prediction. The nonconstant coefficients are projected onto $\|\hat{\bm\vartheta}_k\|\le A_{\max}$, and
\begin{equation}\label{S-eq:apdo_q_filter}
\hat d_h(k+j|k)=\bm\varphi(\hat\psi_k+j\hat\Omega_kT_s)^{\mathrm T}\hat{\bm\vartheta}_k.
\end{equation}

\begin{lemma}[Prediction error]\label{S-lem:pred}
Let $\tilde{\bm\vartheta}_k=\bm\vartheta^\star_{k,0}-\hat{\bm\vartheta}_k$, $C_h=\sqrt{n_h}A_{\max}$ and $c_n=(\sum_{l=1}^{n_h}l^2)^{1/2}$. If the prediction interval is smooth as in Assumption~\ref{ass:A1}, then
\begin{equation}\label{S-eq:eps_h}
\begin{gathered}
|d_h(t_{k+j})-\hat d_h(k+j|k)|\le\varepsilon_h(k,j),\\
\begin{aligned}
\varepsilon_h(k,j)&:=\min\{B_h(k,j),\,2C_h\},\\
B_h(k,j)&=\sqrt{n_h}\big(\|\tilde{\bm\vartheta}_k\|+jT_s\delta_b\big)\\
&\quad+c_nA_{\max}\big(jT_s|\tilde\Omega_k|+\tfrac12(jT_s)^2\delta_\Omega\big).
\end{aligned}
\end{gathered}
\end{equation}
Across a frequency or coefficient switch, $\varepsilon_h(k,j)=2C_h$ remains valid. In particular, the error is uniformly bounded independently of convergence; the expression $B_h(k,j)$ need not itself be below $2C_h$.
\end{lemma}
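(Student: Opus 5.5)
The proof splits the error into a coefficient part and a phase part, and bounds the total by the two-amplitude cap; the final bound is the smaller of the two.

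\emph{Error decomposition.} Using the rotated representation $d_h(t_{k+j})=\bm\varphi(\hat\psi_k+\psi^\star_{k,j})^{\mathrm T}\bm\vartheta^\star_{k,j}$ and the predictor \eqref{S-eq:apdo_q_filter}, add and subtract $\bm\varphi(\hat\psi_k+\psi^\star_{k,j})^{\mathrm T}\bm\vartheta^\star_{k,0}$ and $\bm\varphi(\hat\psi_k+\psi^\star_{k,j})^{\mathrm T}\hat{\bm\vartheta}_k$. The error then splits into three terms:
\begin{itemize}
\item $\bm\varphi(\cdot)^{\mathrm T}(\bm\vartheta^\star_{k,j}-\bm\vartheta^\star_{k,0})$, the coefficient variation;
\item $\bm\varphi(\cdot)^{\mathrm T}\tilde{\bm\vartheta}_k$, the coefficient error;
\item $\big[\bm\varphi(\hat\psi_k+\psi^\star_{k,j})-\bm\varphi(\hat\psi_k+j\hat\Omega_kT_s)\big]^{\mathrm T}\hat{\bm\vartheta}_k$, the phase discrepancy.
\end{itemize}

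\emph{Coefficient terms.} Each $\cos/\sin$ pair in $\bm\varphi$ has unit norm, so $\|\bm\varphi\|=\sqrt{n_h}$. Cauchy--Schwarz and the smooth-segment variation assumption bound the first two terms by $\sqrt{n_h}(\|\tilde{\bm\vartheta}_k\|+jT_s\delta_b)$.

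\emph{Phase term.} Set $\Delta\psi=\psi^\star_{k,j}-j\hat\Omega_kT_s$.
\begin{itemize}
\item Block $l$ of $\bm\varphi$ is a rotation by angle $l\psi$. The difference between the two blocks has norm $2|\sin(l\Delta\psi/2)|\le l|\Delta\psi|$.
\item Pairing this with the corresponding $2$-vector block of $\hat{\bm\vartheta}_k$ and applying Cauchy--Schwarz over the blocks gives at most $(\sum_l l^2)^{1/2}|\Delta\psi|\,\|\hat{\bm\vartheta}_k\|\le c_nA_{\max}|\Delta\psi|$. Here I use the projection $\|\hat{\bm\vartheta}_k\|\le A_{\max}$.
\item For $|\Delta\psi|$, write $\psi^\star_{k,j}=\int_{t_k}^{t_{k+j}}\Omega\,dt$ with $\Omega(t)=\Omega_k+\int_{t_k}^t\dot\Omega$. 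Under Assumption~\ref{ass:A1} this gives $|\Delta\psi|\le jT_s|\Omega_k-\hat\Omega_k|+\tfrac12(jT_s)^2\delta_\Omega$.
\end{itemize}
Summing the two contributions yields $B_h(k,j)$.

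\emph{Uniform amplitude bound.} This part does not use smoothness. Since $\|\bm\varphi\|=\sqrt{n_h}$, $\|\bm\vartheta^\star\|\le A_{\max}$ and $\|\hat{\bm\vartheta}_k\|\le A_{\max}$, Cauchy--Schwarz gives $|d_h|\le C_h$ and $|\hat d_h|\le C_h$. The error is therefore at most $2C_h$. This holds across switches, where the smooth-interval estimates fail.

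Taking the minimum of $B_h(k,j)$ and $2C_h$ gives \eqref{S-eq:eps_h}.

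The main obstacle is bookkeeping the phase reference. The representation is rotated to $\hat\psi_k$, so $\tilde{\bm\vartheta}_k$ absorbs the accumulated phase error at time $k$. Only the increment discrepancy $\Delta\psi$ should then appear in the phase term. I would check that this convention matches the definition of $\bm\vartheta^\star_{k,j}$, so that no additional $|\psi(t_k)-\hat\psi_k|$ term arises.
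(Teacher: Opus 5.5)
Your proposal is correct and follows essentially the same argument as the paper. You add and subtract the prediction built from $\bm\vartheta^\star_{k,0}$ at the true phase, bound the coefficient and variation terms using $\|\bm\varphi\|=\sqrt{n_h}$, bound the phase discrepancy harmonic by harmonic and combine the harmonics with Cauchy--Schwarz to get $c_nA_{\max}$, and cap the total by the amplitude bound $2C_h$. Your phase-reference concern is settled by the definition itself: $\bm\vartheta^\star_{k,j}$ is the true coefficient vector rotated blockwise (block $l$ by $l(\psi(t_k)-\hat\psi_k)$), so that offset is absorbed into $\tilde{\bm\vartheta}_k$ without changing any norm, and pairing the phase term with $\hat{\bm\vartheta}_k$ instead of $\bm\vartheta^\star_{k,0}$ is equally valid since both have norm at most $A_{\max}$.
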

\begin{proof}
Add and subtract the prediction using $\bm\vartheta^\star_{k,0}$ and the true phase. The coefficient and coefficient-variation terms are bounded by $\sqrt{n_h}(\|\tilde{\bm\vartheta}_k\|+jT_s\delta_b)$ because $\|\bm\varphi\|=\sqrt{n_h}$. The phase discrepancy is at most $jT_s|\tilde\Omega_k|+(jT_s)^2\delta_\Omega/2$ on a smooth interval. Harmonic $l$ contributes at most its amplitude times $l$ times this discrepancy; Cauchy--Schwarz gives $c_nA_{\max}$ times the phase bound. Independently, $|d_h|\le C_h$ and $|\hat d_h|\le C_h$, so the same error is at most $2C_h$. Taking the minimum proves \eqref{S-eq:eps_h}; only the latter amplitude argument is needed across switches.
\end{proof}
The coefficient error includes the effect of fitting the posterior signal in the presence of residual disturbances; projection bounds its magnitude without requiring exact separation or RLS convergence. Frequency and coefficient convergence~\cite{ioannouRobustAdaptiveControl1996} can tighten the local bound, whereas the amplitude bound alone supplies the uniform bound used below.

\section{Simulation Parameter Tuning Procedure}\label{S-S06}
This section records the simulation tuning and disjoint-seed evaluation procedure.
The observer and predictor parameters of each method were selected from a common budget of at most 16 candidates (DO bandwidth $\omega_o$ for MPC-DO; $\omega_o$ and Q-filter bandwidth for MPC-APDO; Kalman weights for MPC with fixed frequency and MPC with adaptive frequency; and $\omega_o$ and the RLS forgetting factor for MPC-APDO (preview) and MPC-DS). Each candidate was evaluated on seeds 0--2 across all four base frequencies $f_0$ and both scenarios, and a single parameter set per method and plant variant was retained. With $\omega_o$ and $\lambda$ of MPC-DS ($\kappa=0$) fixed, the drift-weight parameters were then selected from an additional grid of 15 candidates, $\kappa\in\{1,2,4,6,8\}$ and $L\in\{10,25,50\}$, evaluated on seeds 0--5 of the calibrated plant; the resulting pair $(\kappa,L)$ is held fixed across both plant variants and all mismatch levels. This procedure yielded $\omega_o=8$\,rad/s for MPC-DO and $\omega_o=10$\,rad/s for MPC-APDO, MPC-APDO (preview), and MPC-DS on the calibrated plant, and $\omega_o=56$\,rad/s for MPC-DO and MPC-DS on the nominal plant. All test evaluations below use the disjoint seeds 100--109, providing 40 runs per method and scenario.

\section{Observer Bandwidth Scaling and Element-Wise Gain Checks}\label{S-S07}

With the bandwidth parameterization~\cite{gaoScalingBandwidthParameterization2003} $l_0=2\omega_o$, $l_2=0$, $L_2=J\omega_o^2$, the matrix $\bm A_c$ has the double eigenvalue $-\omega_o$, $\bm I+h\bm A_c$ has the double eigenvalue $1-h\omega_o$, and the Schur condition reduces to $0<h\omega_o<2$. In this case the feedforward estimation error enters \eqref{eq:eta_bound} only through its increment: with $|d_{r,k+1}-d_{r,k}|\le T_s\delta_r$ and \eqref{eq:lstm_proj}, $|e_{L,k+1}-e_{L,k}|\le T_s\delta_e\le T_s(\delta_r+\delta_L)$, where $\delta_e$ is the rate bound of the feedforward estimation error, and, in the small-step regime $T_s\omega_o\ll1$, since $\gamma_{22}=2/(T_s\omega_o)+O(1)$ for $n_s=1$ while $\gamma_{21}$ is independent of $\omega_o$ to first order (it is fixed by $J$ and the double-pole placement),
\begin{equation}\label{S-eq:do_scaling}
	\begin{aligned}
\varepsilon_d&\approx\frac{2c_3\delta_e}{\omega_o}+\gamma_{21}c_1T_s\bar\varepsilon_{h,0}^{\mathrm{obs}}\\
&\quad+\gamma_{21}\bar\eta_1^{\mathrm{mod}}+\gamma_{22}\bar\eta_2^{\mathrm{mod}}+O(T_s).
\end{aligned}
\end{equation}
For the yaw-axis model of Section~\ref{sec:sim_setup} ($J=\tau_m/K$, $T_s=0.02$\,s, $n_s=4$), direct evaluation gives $T_s\gamma_{22}=0.207,\,0.079,\,0.043$ for $\omega_o=10,\,28,\,56$\,rad/s, against $2/\omega_o=0.200,\,0.071,\,0.036$, and $\gamma_{21}=6.7,\,6.6,\,5.9$, nearly independent of $\omega_o$ over these evaluated settings; the small-step approximation is not a uniform gain formula over the entire Schur range.

\section{Complete Proof of the Regional Practical Tracking Bound}\label{S-S08}

\begin{proof}
The Schur property makes all three gains finite. By \eqref{eq:eps_h_rho}, $|\bm K_d\delta\bm d_k|\le\varepsilon_h$. For $l=k-n_d$, nonexpansiveness of saturation onto an interval containing $d_{r,k}$ gives
$|d_{r,k}-\hat d^{\mathrm c}_{r,l}|\le|d_{r,l}-\hat d_{r,l}|+|D_l^\star-\Delta_l|\le\varepsilon_r$.
The finite-time term in $\bar\varepsilon_d$ follows from the Schur observer recursion; it includes the initial observer error.

Whenever the complete quadratic minimizer is feasible for the input box and the unsoftened predicted rows, its slack cost is zero and it minimizes the full objective. Both loops therefore use the affine law, and subtracting their nominal-history recursions gives
$\delta\bm z_{k+1}=\bm A_{\mathrm{cl}}\delta\bm z_k-\bm B\bm K_d\delta\bm d_k+\bm Ew_k+\bm\nu_k$,
with $|w_k|\le\varepsilon_r$. Iteration and the triangle inequality give \eqref{eq:iss} on every interval on which the affine law applies, and its right-hand side is at most $\mathcal B$.

At $k_0$, \eqref{eq:sequence_sensitivity} and \eqref{eq:region} place every component of $\bm V_{k_0}^0$ inside $[-b_0,b_0]\subseteq\mathcal I_{k_0}$ and every predicted state row inside its unsoftened bounds. The optimizer thus equals the affine law. The deviation recursion preserves the budget $\mathcal B$ at the next sample. Reapplying the same row inequalities proves this statement inductively for every $k\ge k_0$. The envelope inclusion ensures that the assumed residual bounds remain applicable.
\end{proof}

\section{Nominal Feedback and Impulse-Gain Verification}\label{S-S09}

\begin{remark}[Numerical check of the nominal feedback]\label{S-rem:numeric}
Direct least-squares evaluation for the retained-drift simulation model ($N_p=20$, $N_c=3$, $q_\omega=0$, $r=10^{-4}$, $p=1$) gives $\rho(\bm A_{\mathrm{cl}})=0.6202$ for yaw and $0.5979$ for pitch. For yaw, the angle-component impulse gains are $\sum_l|[\bm A_{\mathrm{cl}}^l\bm B]_1|=0.01383$ and $\sum_l|[\bm A_{\mathrm{cl}}^l\bm E]_1|=0.04400$\,deg per command level, and $\sum_j|K_{d,j}|=4.516$. The flight model uses its separately identified parameters, drift cancellation and $r=10^{-3}$; its spectral radii are $0.7788$ (yaw) and $0.7807$ (pitch), with corresponding angle-component gains $(0.03299,0.08764)$ and $(0.03211,0.06444)$. These checks establish the Schur property of the selected nominal feedbacks, not the full margin conditions of \eqref{eq:region} for a flight trajectory.
\end{remark}

%% file: main.bbl
\begin{thebibliography}{10}
\expandafter\ifx\csname url\endcsname\relax
  \def\url#1{\texttt{#1}}\fi
\expandafter\ifx\csname urlprefix\endcsname\relax\def\urlprefix{URL }\fi
\expandafter\ifx\csname href\endcsname\relax
  \def\href#1#2{#2} \def\path#1{#1}\fi

\bibitem{rueStabilizationPrecisionElectrooptical1969}
A.~K. Rue, Stabilization of precision electrooptical pointing and tracking
  systems, IEEE Trans. Aerosp. Electron. Syst. AES-5~(5) (1969) 805--819.
\newblock \href {https://doi.org/10.1109/TAES.1969.309879}
  {\path{doi:10.1109/TAES.1969.309879}}.

\bibitem{hilkertInertiallyStabilizedPlatform2008}
J.~M. Hilkert, Inertially stabilized platform technology concepts and
  principles, IEEE Control Syst. Mag. 28~(1) (2008) 26--46.
\newblock \href {https://doi.org/10.1109/MCS.2007.910256}
  {\path{doi:10.1109/MCS.2007.910256}}.

\bibitem{Masten2008}
M.~K. Masten, Inertially stabilized platforms for optical imaging systems, IEEE
  Control Syst. Mag. 28~(1) (2008) 47--64.
\newblock \href {https://doi.org/10.1109/MCS.2007.910201}
  {\path{doi:10.1109/MCS.2007.910201}}.

\bibitem{ekstrandEquationsMotionTwoaxes2001}
B.~Ekstrand, Equations of motion for a two-axes gimbal system, IEEE Trans.
  Aerosp. Electron. Syst. 37~(3) (2001) 1083--1091.
\newblock \href {https://doi.org/10.1109/7.953259}
  {\path{doi:10.1109/7.953259}}.

\bibitem{renAngularVibrationSurvey2003}
P.~Merritt, J.~Donaldson, D.~O'Brien, K.~Coleman, G.~Pyles, Angular vibration
  survey of various aircraft, in: Laser Systems Technology, Vol. 5087 of Proc.
  SPIE, 2003, p.~35.
\newblock \href {https://doi.org/10.1117/12.490123}
  {\path{doi:10.1117/12.490123}}.

\bibitem{maoModelingRobustContinuous2020}
J.~Mao, J.~Yang, X.~Liu, S.~Li, Q.~Li, Modeling and robust continuous {TSM}
  control for an inertially stabilized platform with couplings, IEEE Trans.
  Control Syst. Technol. 28~(6) (2020) 2548--2555.
\newblock \href {https://doi.org/10.1109/TCST.2019.2933382}
  {\path{doi:10.1109/TCST.2019.2933382}}.

\bibitem{yunInvestigationTwoStageVibration2020}
H.~Yun, L.~Liu, Q.~Li, H.~Yang, Investigation on two-stage vibration
  suppression and precision pointing for space optical payloads, Aerosp. Sci.
  Technol. 96 (2020) 105543.
\newblock \href {https://doi.org/10.1016/j.ast.2019.105543}
  {\path{doi:10.1016/j.ast.2019.105543}}.

\bibitem{dongDynamicModelControl2017}
F.~Dong, X.~Lei, W.~Chou, A dynamic model and control method for a two-axis
  inertially stabilized platform, IEEE Trans. Ind. Electron. 64~(1) (2017)
  432--439.
\newblock \href {https://doi.org/10.1109/TIE.2016.2608322}
  {\path{doi:10.1109/TIE.2016.2608322}}.

\bibitem{congIsolationControlInertially2016}
S.~Cong, K.~Deng, W.~Shang, D.~Kong, H.~Shen, Isolation control for inertially
  stabilized platform based on nonlinear friction compensation, Nonlinear Dyn.
  84~(3) (2016) 1123--1133.
\newblock \href {https://doi.org/10.1007/s11071-015-2557-4}
  {\path{doi:10.1007/s11071-015-2557-4}}.

\bibitem{tianAdaptiveModelPredictive2026}
K.~Tian, L.-J. Hu, C.~Bai, H.-P. Ren, Adaptive model predictive control with
  extended state observer for unmanned aerial vehicle trajectory tracking,
  Aerosp. Sci. Technol. 168 (2026) 111085.
\newblock \href {https://doi.org/10.1016/j.ast.2025.111085}
  {\path{doi:10.1016/j.ast.2025.111085}}.

\bibitem{nigusieExtendedStateObserver2026}
A.~T. Nigusie, A.~Z. Woldeyohannis, B.~Lin, Extended state observer enhanced
  constrained {LPV-MPC} for robust quadrotor trajectory tracking under wind
  disturbances, Aerosp. Sci. Technol. 177 (2026) 112671.
\newblock \href {https://doi.org/10.1016/j.ast.2026.112671}
  {\path{doi:10.1016/j.ast.2026.112671}}.

\bibitem{guoDisturbanceAttenuationRejection2005}
L.~Guo, W.-H. Chen, Disturbance attenuation and rejection for systems with
  nonlinearity via {DOBC} approach, Int. J. Robust Nonlinear Control 15~(3)
  (2005) 109--125.
\newblock \href {https://doi.org/10.1002/rnc.978} {\path{doi:10.1002/rnc.978}}.

\bibitem{chenDisturbanceobserverbasedControlRelated2015}
W.-H. Chen, J.~Yang, L.~Guo, S.~Li, Disturbance-observer-based control and
  related methods---{A}n overview, IEEE Trans. Ind. Electron. 63~(2) (2016)
  1083--1095.
\newblock \href {https://doi.org/10.1109/TIE.2015.2478397}
  {\path{doi:10.1109/TIE.2015.2478397}}.

\bibitem{maoAdaptiveNonsingularFast2022}
J.~Mao, S.~Li, C.~Xiang, Q.~Li, Adaptive nonsingular fast terminal sliding mode
  control for electromechanical actuator of inertial stabilized platform with
  actuator saturation, ISA Trans. 129 (2022) 379--392.

\bibitem{safaRobustOutputFeedback2019}
A.~Safa, R.~Yazdanpanah~Abdolmalaki, Robust output feedback tracking control
  for inertially stabilized platforms with matched and unmatched uncertainties,
  IEEE Trans. Control Syst. Technol. 27~(1) (2019) 118--131.
\newblock \href {https://doi.org/10.1109/TCST.2017.2761324}
  {\path{doi:10.1109/TCST.2017.2761324}}.

\bibitem{golestaniDisturbanceObserverConstrained2023}
M.~Golestani, W.~Zhang, Y.~Yang, N.~Xuan-Mung, Disturbance observer-based
  constrained attitude control for flexible spacecraft, IEEE Trans. Aerosp.
  Electron. Syst. 59~(2) (2023) 963--972.
\newblock \href {https://doi.org/10.1109/TAES.2022.3195691}
  {\path{doi:10.1109/TAES.2022.3195691}}.

\bibitem{leiCompositeControlMethod2015a}
X.~Lei, Y.~Zou, F.~Dong, A composite control method based on the adaptive
  {RBFNN} feedback control and the {ESO} for two-axis inertially stabilized
  platforms, ISA Trans. 59 (2015) 424--433.
\newblock \href {https://doi.org/10.1016/j.isatra.2015.09.011}
  {\path{doi:10.1016/j.isatra.2015.09.011}}.

\bibitem{changCompensationControlStrategy2024}
S.~Chang, J.~Cao, J.~Pang, F.~Zhou, W.~Chen, Compensation control strategy for
  photoelectric stabilized platform based on disturbance observation, Aerosp.
  Sci. Technol. 145 (2024) 108909.
\newblock \href {https://doi.org/10.1016/j.ast.2024.108909}
  {\path{doi:10.1016/j.ast.2024.108909}}.

\bibitem{zhuangMechatronicDisturbanceSuppression2026}
S.~Zhuang, J.~Li, J.~Deng, Y.~Mao, F.~Shen, Mechatronic disturbance suppression
  for electro-optical tracking in application to aerospace communication:
  {E}rror-based {ESO} and compensated backstepping design, Aerosp. Sci.
  Technol. 176 (2026) 112126.
\newblock \href {https://doi.org/10.1016/j.ast.2026.112126}
  {\path{doi:10.1016/j.ast.2026.112126}}.

\bibitem{miaoRobustObserverBasedVisual2027}
K.~Miao, C.~Wang, Y.~Niu, W.~Zhang, H.~Yu, A.~Wang, Robust observer-based
  visual servoing control of gimbal-mounted cameras for multi-{UAV} target
  tracking via relative dynamics and distributed {G}aussian processes, Aerosp.
  Sci. Technol. 180 (2027) 113689.
\newblock \href {https://doi.org/10.1016/j.ast.2026.113689}
  {\path{doi:10.1016/j.ast.2026.113689}}.

\bibitem{guoCompositeDisturbanceFiltering2023}
L.~Guo, W.~Li, Y.~Zhu, X.~Yu, Z.~Wang, Composite disturbance filtering: {A}
  novel state estimation scheme for systems with multisource, heterogeneous,
  and isomeric disturbances, IEEE Open J. Ind. Electron. Soc. 4 (2023)
  387--400.
\newblock \href {https://doi.org/10.1109/OJIES.2023.3317271}
  {\path{doi:10.1109/OJIES.2023.3317271}}.

\bibitem{sunNeuralNetworkBasedDOBC2017}
H.~Sun, L.~Guo, Neural network-based {DOBC} for a class of nonlinear systems
  with unmatched disturbances, IEEE Trans. Neural Netw. Learn. Syst. 28~(2)
  (2017) 482--489.
\newblock \href {https://doi.org/10.1109/TNNLS.2015.2511450}
  {\path{doi:10.1109/TNNLS.2015.2511450}}.

\bibitem{shenAdaptiveNeuralNetworkBased2021}
S.~Shen, J.~Xu, Adaptive neural network-based active disturbance rejection
  flight control of an unmanned helicopter, Aerosp. Sci. Technol. 119 (2021)
  107062.
\newblock \href {https://doi.org/10.1016/j.ast.2021.107062}
  {\path{doi:10.1016/j.ast.2021.107062}}.

\bibitem{leblebiciogluPhysicsInformedDisturbanceEstimation2022}
D.~Leblebicioglu, O.~Atesoglu, M.~Cakmakci, Physics-informed disturbance
  estimation and nonlinear controller design for a multi-axis gimbal system,
  IFAC-PapersOnLine 55~(37) (2022) 530--535.
\newblock \href {https://doi.org/10.1016/j.ifacol.2022.11.237}
  {\path{doi:10.1016/j.ifacol.2022.11.237}}.

\bibitem{jiaEVOLVEROnlineLearning2024}
J.~Jia, et~al., {EVOLVER}: {O}nline learning and prediction of disturbances for
  robot control, IEEE Trans. Robot. 40 (2024) 382--402.
\newblock \href {https://doi.org/10.1109/TRO.2023.3326318}
  {\path{doi:10.1109/TRO.2023.3326318}}.

\bibitem{huangLSTMMPCDeepLearning2023}
K.~Huang, K.~Wei, F.~Li, C.~Yang, W.~Gui, {LSTM-MPC}: {A} deep learning based
  predictive control method for multimode process control, IEEE Trans. Ind.
  Electron. 70~(11) (2023) 11544--11554.
\newblock \href {https://doi.org/10.1109/TIE.2022.3229323}
  {\path{doi:10.1109/TIE.2022.3229323}}.

\bibitem{rahamanSpectralBiasNeural2019}
N.~Rahaman, et~al., \href{https://proceedings.mlr.press/v97/rahaman19a.html}{On
  the spectral bias of neural networks}, in: Proc. 36th Int. Conf. Mach.
  Learn., Vol.~97 of Proc. Mach. Learn. Res., 2019, pp. 5301--5310.
\newline\urlprefix\url{https://proceedings.mlr.press/v97/rahaman19a.html}

\bibitem{guoAntidisturbanceControlTheory2014}
L.~Guo, S.~Cao, Anti-disturbance control theory for systems with multiple
  disturbances: {A} survey, ISA Trans. 53~(4) (2014) 846--849.
\newblock \href {https://doi.org/10.1016/j.isatra.2013.10.005}
  {\path{doi:10.1016/j.isatra.2013.10.005}}.

\bibitem{zhuDisturbanceSeparationBasedEnhanced2024}
Y.~Zhu, Z.~Bao, H.~Teng, Y.~Yang, Y.~Cui, Disturbance separation-based enhanced
  antidisturbance attitude control for flexible spacecrafts with composite
  disturbances, IEEE Trans. Ind. Electron. 71~(10) (2024) 13146--13156.
\newblock \href {https://doi.org/10.1109/TIE.2023.3344849}
  {\path{doi:10.1109/TIE.2023.3344849}}.

\bibitem{yangFixedTimeRefinedDisturbance2024}
Y.~Yang, Y.~Cui, Y.~Zhu, J.~Qiao, Fixed-time refined disturbance observer-based
  composite control for periscope-type coarse pointing assembly, IEEE Trans.
  Ind. Electron. 71~(11) (2024) 14896--14905.
\newblock \href {https://doi.org/10.1109/TIE.2024.3376831}
  {\path{doi:10.1109/TIE.2024.3376831}}.

\bibitem{baoDeepCoupledSlidingMode2025}
Z.~Bao, Y.~Zhu, H.~Teng, Y.~Cui, L.~Guo, Deep-coupled sliding mode disturbance
  observer-based composite attitude control for flexible communication
  spacecrafts, IEEE Trans. Ind. Electron. 72~(1) (2025) 833--844.
\newblock \href {https://doi.org/10.1109/TIE.2024.3398670}
  {\path{doi:10.1109/TIE.2024.3398670}}.

\bibitem{tengRefinedDisturbanceUtilization2025}
H.~Teng, Y.~Lu, P.~Xia, J.~Qiao, L.~Guo, Refined disturbance utilization-based
  green control for spacecraft with composite actuator disturbances, IEEE/ASME
  Trans. Mechatron. 30~(6) (2025) 5652--5662.
\newblock \href {https://doi.org/10.1109/TMECH.2025.3563132}
  {\path{doi:10.1109/TMECH.2025.3563132}}.

\bibitem{yuAntidisturbanceControllabilityAnalysis2021}
X.~Yu, Y.~Zhu, J.~Qiao, L.~Guo, Antidisturbance controllability analysis and
  enhanced antidisturbance controller design with application to flexible
  spacecraft, IEEE Trans. Aerosp. Electron. Syst. 57~(5) (2021) 3393--3404.
\newblock \href {https://doi.org/10.1109/TAES.2021.3079566}
  {\path{doi:10.1109/TAES.2021.3079566}}.

\bibitem{liuFixedTimeDOBPredictive2024}
Q.-Z. Liu, L.~Zhang, B.~Sun, Y.~Xiao, G.-W. Fan, Fixed-time disturbance
  observer-based attitude prescribed performance predictive control for
  flexible spacecraft, IEEE Trans. Aerosp. Electron. Syst. 60~(3) (2024)
  3209--3220.
\newblock \href {https://doi.org/10.1109/TAES.2024.3359591}
  {\path{doi:10.1109/TAES.2024.3359591}}.

\bibitem{Yang2012CCDC}
J.~Yang, S.~Li, Disturbance observer based model predictive control for
  autopilot design of bank-to-turn missiles, in: 2012 24th Chinese Control and
  Decision Conference (CCDC), 2012, pp. 2099--2104.
\newblock \href {https://doi.org/10.1109/CCDC.2012.6243029}
  {\path{doi:10.1109/CCDC.2012.6243029}}.

\bibitem{Maeder2009Automatica}
U.~Maeder, F.~Borrelli, M.~Morari, Linear offset-free model predictive control,
  Automatica 45~(10) (2009) 2214--2222.
\newblock \href {https://doi.org/10.1016/j.automatica.2009.06.005}
  {\path{doi:10.1016/j.automatica.2009.06.005}}.

\bibitem{Li2014JPC}
D.~Li, F.~Gao, Y.~Xi, Separated design of robust model predictive control for
  {LPV} systems with periodic disturbance, Journal of Process Control 24~(1)
  (2014) 250--260.
\newblock \href {https://doi.org/10.1016/j.jprocont.2013.10.010}
  {\path{doi:10.1016/j.jprocont.2013.10.010}}.

\bibitem{Xie2026Automatica}
Y.~Xie, X.~Yu, J.~Qiao, K.~Zhang, Y.~Shi, L.~Guo, Composite anti-disturbance
  control with disturbance prediction and utilization: A tube {MPC} approach,
  Automatica 184 (2026) 112700.
\newblock \href {https://doi.org/10.1016/j.automatica.2025.112700}
  {\path{doi:10.1016/j.automatica.2025.112700}}.

\bibitem{Ahmed2025LCSS}
S.~H. Ahmed, T.~Bonetti, L.~Fagiano, Periodic disturbance learning model
  predictive control, IEEE Control Systems Letters 9 (2025) 1826--1831.
\newblock \href {https://doi.org/10.1109/LCSYS.2025.3586633}
  {\path{doi:10.1109/LCSYS.2025.3586633}}.

\bibitem{Zhou2026TIE}
L.~Zhou, G.~Hu, W.~Xiao, M.~Li, J.~She, Spatial-repetitive-learning-based
  disturbance rejection model predictive control for rotational motor systems,
  IEEE Transactions on Industrial Electronics (2026) 1--11\href
  {https://doi.org/10.1109/TIE.2026.3734558}
  {\path{doi:10.1109/TIE.2026.3734558}}.

\bibitem{yangAdaptivePeriodicdisturbanceObserver2023}
Y.~Yang, Y.~Cui, J.~Qiao, Y.~Zhu, Adaptive periodic-disturbance observer based
  composite control for {SGCMG} gimbal servo system with rotor vibration,
  Control Eng. Pract. 132 (2023) 105407.
\newblock \href {https://doi.org/10.1016/j.conengprac.2022.105407}
  {\path{doi:10.1016/j.conengprac.2022.105407}}.

\bibitem{moratoDataScienceModel2024}
M.~M. Morato, M.~S. Felix, Data science and model predictive control: {A}
  survey of recent advances on data-driven {MPC} algorithms, J. Process Control
  144 (2024) 103327.
\newblock \href {https://doi.org/10.1016/j.jprocont.2024.103327}
  {\path{doi:10.1016/j.jprocont.2024.103327}}.

\bibitem{liSurveyControlLyapunov2023a}
B.~Li, S.~Wen, Z.~Yan, G.~Wen, T.~Huang, A survey on the control {L}yapunov
  function and control barrier function for nonlinear-affine control systems,
  IEEE/CAA J. Autom. Sinica 10~(3) (2023) 584--602.
\newblock \href {https://doi.org/10.1109/JAS.2023.123075}
  {\path{doi:10.1109/JAS.2023.123075}}.

\bibitem{fuAntiWindupMethodBased2024}
T.~Fu, L.~Guan, Y.~Gao, C.~Qin, An anti-windup method based on an {LADRC} for
  miniaturized inertial stabilized platforms on unmanned vehicles in marine
  applications, J. Mar. Sci. Eng. 12~(4) (2024) 616.
\newblock \href {https://doi.org/10.3390/jmse12040616}
  {\path{doi:10.3390/jmse12040616}}.

\bibitem{tengAdaptiveFixedTimeDisturbance2023}
H.~Teng, Z.~Wang, Y.~Zhu, X.~Yao, L.~Guo, Adaptive fixed-time disturbance
  observer-based proximity control for spacecrafts with dynamic obstacle
  constraints, IEEE Trans. Aerosp. Electron. Syst. 59~(6) (2023) 8521--8534.
\newblock \href {https://doi.org/10.1109/TAES.2023.3308556}
  {\path{doi:10.1109/TAES.2023.3308556}}.

\bibitem{kimQuadrotorConstraints2024}
S.-h. Kim, M.~Kim, Y.~Kim, Fault-tolerant adaptive control for trajectory
  tracking of a quadrotor considering state constraints and input saturation,
  IEEE Trans. Aerosp. Electron. Syst. 60~(3) (2024) 3148--3159.
\newblock \href {https://doi.org/10.1109/TAES.2024.3360021}
  {\path{doi:10.1109/TAES.2024.3360021}}.

\bibitem{MIL-STD-1553B}
{U.S. Department of Defense}, Digital time division command/response multiplex
  data bus, Tech. Rep. MIL-STD-1553B, U.S. Department of Defense, Washington,
  D.C., notice 2 (Sep. 1986).

\bibitem{wangDisturbanceObservationSuppression2024}
L.~Wang, et~al., Disturbance observation and suppression in an airborne
  electro-optical stabilized platform based on a generalized high-order
  extended state observer, Sensors 24~(11) (2024) 3629.
\newblock \href {https://doi.org/10.3390/s24113629}
  {\path{doi:10.3390/s24113629}}.

\bibitem{canudasdewitNewModelControl1995a}
C.~Canudas De~Wit, H.~Olsson, K.~Astrom, P.~Lischinsky, A new model for control
  of systems with friction, IEEE Trans. Autom. Control 40~(3) (1995) 419--425.
\newblock \href {https://doi.org/10.1109/9.376053}
  {\path{doi:10.1109/9.376053}}.

\bibitem{maApproximationbasedNussbaumGain2022}
H.~Ma, H.~Ren, Q.~Zhou, R.~Lu, H.~Li, Approximation-based {N}ussbaum gain
  adaptive control of nonlinear systems with periodic disturbances, IEEE Trans.
  Syst., Man, Cybern.: Syst. 52~(4) (2022) 2591--2600.
\newblock \href {https://doi.org/10.1109/TSMC.2021.3050993}
  {\path{doi:10.1109/TSMC.2021.3050993}}.

\bibitem{yangDOBasedAdaptiveConsensus2023}
B.~Yang, H.~Li, D.~Yao, W.~Meng, {DO}-based adaptive consensus control for
  multiple {MUAVs} with dynamic constraints, IEEE Trans. Syst., Man, Cybern.:
  Syst. 53~(4) (2023) 2387--2398.
\newblock \href {https://doi.org/10.1109/TSMC.2022.3213249}
  {\path{doi:10.1109/TSMC.2022.3213249}}.

\bibitem{ioannouRobustAdaptiveControl1996}
P.~A. Ioannou, J.~Sun, Robust Adaptive Control, Prentice Hall, Upper Saddle
  River, NJ, 1996.

\bibitem{gaoScalingBandwidthParameterization2003}
Z.~Gao, Scaling and bandwidth-parameterization based controller tuning, in:
  Proceedings of the 2003 American Control Conference, Vol.~6, Denver, CO, USA,
  2003, pp. 4989--4996.
\newblock \href {https://doi.org/10.1109/ACC.2003.1242516}
  {\path{doi:10.1109/ACC.2003.1242516}}.

\bibitem{kawatoInternalModelsMotor1999a}
M.~Kawato, Internal models for motor control and trajectory planning, Curr.
  Opin. Neurobiol. 9~(6) (1999) 718--727.
\newblock \href {https://doi.org/10.1016/S0959-4388(99)00028-8}
  {\path{doi:10.1016/S0959-4388(99)00028-8}}.

\bibitem{kerriganSoftConstraintsExact2000}
E.~C. Kerrigan, J.~M. Maciejowski, Soft constraints and exact penalty functions
  in model predictive control, in: Proceedings of the UKACC International
  Conference on Control (Control 2000), Cambridge, UK, 2000.

\bibitem{jiangInputtostateStabilityDiscretetime2001}
Z.-P. Jiang, Y.~Wang, Input-to-state stability for discrete-time nonlinear
  systems, Automatica 37~(6) (2001) 857--869.
\newblock \href {https://doi.org/10.1016/S0005-1098(01)00028-0}
  {\path{doi:10.1016/S0005-1098(01)00028-0}}.

\bibitem{limonInputtoStateStabilityUnifying2009}
D.~Limon, T.~Alamo, D.~M. Raimondo, D.~Mu{\~n}oz de~la Pe{\~n}a, J.~M. Bravo,
  A.~Ferramosca, E.~F. Camacho, Input-to-state stability: A unifying framework
  for robust model predictive control, in: L.~Magni, D.~M. Raimondo,
  F.~Allg{\"o}wer (Eds.), Nonlinear Model Predictive Control: Towards New
  Challenging Applications, Vol. 384 of Lecture Notes in Control and
  Information Sciences, Springer, Berlin, Heidelberg, 2009, pp. 1--26.
\newblock \href {https://doi.org/10.1007/978-3-642-01094-1_1}
  {\path{doi:10.1007/978-3-642-01094-1_1}}.

\end{thebibliography}
